\documentclass[11pt]{article}

\usepackage[margin=1in]{geometry}
\usepackage{amsmath, amssymb, amsthm, mathtools}
\usepackage{stmaryrd}
\usepackage{enumitem}
\usepackage[round]{natbib}
\usepackage{microtype}
\usepackage[hidelinks]{hyperref}

\theoremstyle{plain}
\newtheorem{theorem}{Theorem}
\newtheorem{proposition}{Proposition}
\newtheorem{corollary}{Corollary}
\theoremstyle{definition}
\newtheorem{definition}{Definition}
\newtheorem{example}{Example}
\theoremstyle{remark}

\newcommand{\Tw}{\mathrm{Tw}}
\newcommand{\Comp}{\mathrm{Comp}}
\newcommand{\Int}{\mathrm{Int}}
\newcommand{\Tell}{\mathrm{Tell}}
\newcommand{\supp}{\operatorname{supp}}
\newcommand{\SB}{\mathrm{SB}}
\newcommand{\Nest}{\mathrm{Nest}}

\DeclareMathOperator{\boxright}{\;\Box\!\!\rightarrow\;}

\title{\bf Trust and Its Betrayal under Three Representational Strategies}

\author{Mihnea C. Moldoveanu \and Joel A. C. Baum}

\date{}

\begin{document}
\maketitle
\begin{center}
\vspace{-2.5em}
Rotman School of Management, University of Toronto\\[4pt]
July 16, 2026
\end{center}
\medskip

\begin{abstract}
\noindent Trust is a propositional attitude of a distinctive kind: to trust is to rely on another under conditions where reliance could be disappointed, and the disappointment of trust---betrayal---differs qualitatively from the disappointment of a prediction. We treat trust as a \emph{subjunctive} epistemic state: $A$ trusts $B$'s competence when $A$ believes that \emph{were $P$ true, $B$ would know it}, and $B$'s integrity when $A$ believes that \emph{were $B$ to know $P$, he would disclose it to $A$}. We develop three precise representations of this state---as lexicographic \emph{assumption} \citep{blume1991a, brandenburger2008}, as \emph{ordinal closeness} in a Lewis--Stalnaker sphere system \citep{lewis1973, stalnaker1968}, and as \emph{strong belief} in a conditional probability system \citep{battigalli2002}---and for each we ask whether the Brandenburger--Keisler impossibility on common belief \citep{brandenburger2006} survives when the assumption of rationality is replaced by an assumption of trustworthiness. The three representations agree that every \emph{finite} depth of common trust is realizable while the \emph{completed} common-trust fixed point is the locus of difficulty, but they differ sharply in \emph{how} the difficulty manifests, and---our organizing finding---in how each survives a concrete betrayal. Using three motivating episodes of increasing higher-order structure, and a fully specified six-state model on which all three updates are computed, we show that the same betrayal refutes an agent's \emph{level ordering} under the lexicographic representation, contaminates her \emph{closeness ordering} in proportion to the betrayer's deliberateness under the ordinal representation, and merely \emph{shifts her operative conditioning hypothesis} while leaving her belief structure coherent under the strong-belief representation. The feature that makes each attitude a faithful model of individual trust is, in each case, the feature that governs the fate of common trust and the survivability of betrayal. The paper is intended as a contribution to the epistemic-network (``epinet'') program \citep{moldoveanu2011, moldoveanu2014}, locating trust's two faces---a reflective architecture resident at the nodes and a fixed-point structure resident in the edges---in a single formalism.
\end{abstract}

\section{Introduction}

\subsection{Why trust, and why betrayal}

Trust is among the most consequential and least tractable of the relations that organize economic and social life. It is the precondition of delegation, of testimony, of exchange under incomplete contracts, and of nearly every cooperative undertaking whose full specification would be prohibitively costly \citep{gambetta1988, hardin2002}. Where it is present, parties economize on monitoring, disclosure, and enforcement; where it is absent, these costs return in full, and much that trust makes possible becomes infeasible. It is a commonplace of the organizational and sociological literatures that trust functions as a lubricant for collective action, and of the philosophical literature that it is a distinctive attitude rather than a mere prediction \citep{baier1986, jones1996, holton1994}. What is less often remarked, and what this paper takes as its point of departure, is that these two literatures have not settled on a common account of what trust \emph{is}---and that the disagreement is not idle, because different accounts of the attitude yield different answers to the question that most sharply individuates trust from its neighbors: what counts as its betrayal.

That betrayal is the phenomenon by which trust is known is a recurring theme. Baier's \citeyearpar{baier1986} foundational treatment turns on the observation that trust, unlike mere reliance, can be \emph{betrayed} and not merely disappointed: when a shelf one relied upon to hold a vase gives way, one is disappointed but not betrayed, whereas when a person one trusted fails one, the reactive attitude is of a different and distinctively interpersonal kind. Baier locates the difference in goodwill---to trust is to rely on another's goodwill, and betrayal is the appropriate response to reliance on goodwill that proves absent. Holton \citeyearpar{holton1994}, dissenting from the view that trust is a species of belief, locates it instead in a \emph{participant stance}: to trust is to adopt toward another a readiness to feel betrayal should one's reliance be disappointed, and gratitude should it be upheld. Jones \citeyearpar{jones1996} takes trust to be an affective attitude of optimism about the other's goodwill and competence. Across these otherwise divergent accounts, one structural point is constant: \emph{the possibility of betrayal is the mark of trust's presence}, and the character of the betrayal reveals the content of the trust. Philosophers have accordingly studied betrayal in order to understand trust.

We adopt this methodological orientation but press it in a direction the philosophical literature has not: we make the \emph{representation} of the trusting attitude formally explicit, and we ask how the choice of representation determines what betrayal is. Our claim is that betrayal is not a representation-independent event. What it is for trust to be broken depends on how the trust was held, and agents who hold structurally different attitudes---even toward the same trustee, over the same matter---will be broken differently, will register different events as betrayals, and will face different prospects for repair. Betrayal, on this view, is not a single relation between a truster and a trustee but a family of relations indexed by the form of the truster's commitment.

\subsection{The representation of trust determines what counts as betrayal}

To make this concrete, consider what varies across accounts of trust and what each variation implies for betrayal. The organizational literature, following Mayer, Davis, and Schoorman \citeyearpar{mayer1995}, decomposes trustworthiness into \emph{ability} (competence), \emph{benevolence} (goodwill toward the trustor), and \emph{integrity} (adherence to acceptable principles). A truster who weights ability will count incapacity as betrayal; one who weights benevolence will count indifference as betrayal; one who weights integrity will count principled but harmful conduct as betrayal. The \emph{same} conduct---a disclosure, a silence, an omission---can be a betrayal under one weighting and a forgivable lapse under another. This is familiar. What is not familiar, and what a formal treatment can supply, is that the difference runs deeper than the \emph{content} of what is trusted (competence versus goodwill) to the \emph{form} in which the trust is held---the structure of the truster's epistemic commitment---and that this formal difference is at least as consequential for betrayal as the difference in content.

Compare three trusters who agree entirely on \emph{what} they trust $B$ to do---to know the relevant facts and to disclose them---but differ in \emph{how} they hold that trust. The first holds it as a ranked hypothesis: she treats $B$'s trustworthiness as overwhelmingly her primary expectation while keeping betrayal as a remote but genuine possibility, ordered beneath the expectation. The second holds it as a disposition over the nearest possibilities: she treats the worlds in which $B$ behaves well as the closest ones, so that ``were the question to arise, $B$ would come through'' is true in virtue of what happens across the nearby worlds. The third holds it as a commitment robust to supposition: she believes $B$ trustworthy not merely as things stand but under every hypothesis she can entertain that is consistent with his trustworthiness, relinquishing the belief only under a hypothesis that flatly contradicts it. These three are not weightings of ability against benevolence; they agree on content. They are three \emph{forms} of the same trust, and---this is the burden of the paper---a single betrayal falls upon them differently. What refutes the first is a demotion in a ranking; what refutes the second is a discovery about which worlds were near; what refutes the third is the realization of a hypothesis under which the belief was never asserted. Each is a different way for trust to break, and each leaves a different residue for repair.

That different people may hold structurally different forms of trust is not a marginal possibility to be idealized away but, we suggest, the normal case. A cautious institutional actor who has learned to keep even remote betrayals in view holds trust in the ranked-hypothesis form; a person in an intimate relationship who simply does not entertain the nearby worlds in which the other betrays holds it in the closeness form; a professional who has stress-tested a counterpart across every contingency she can imagine holds it in the robustness form. The three coexist in any population, and even within one person across domains. A theory of betrayal that fixes a single representation therefore describes only one kind of truster; a theory that makes the representation a parameter can say, for each kind, what betrayal is and how it is survived. Supplying that parameterization is our aim.

\subsection{Why three strategies, and what unifies them}

We accordingly develop three precise representations of the same subjunctive trusting attitude and carry each through the same sequence of questions. The common attitude is subjunctive: $A$ trusts $B$'s competence when $A$ holds that \emph{were $P$ true, $B$ would know it}, and $B$'s integrity when $A$ holds that \emph{were $B$ to know $P$, he would disclose it to $A$}. The three representations differ in the formal object that carries the subjunctive: a lexicographic probability system, whose levels rank possibilities \citep{blume1991a, brandenburger2008}; a Lewis--Stalnaker sphere system, whose nested spheres order possibilities by closeness \citep{lewis1973, stalnaker1968}; and a conditional probability system, whose conditional measures encode belief under supposition \citep{battigalli2002}. These are, respectively, the ranked-hypothesis, closeness, and robustness forms of the preceding subsection, now made formally exact.

For each we ask two questions. First, the \emph{interactive} question: trust in its mature form is mutual and iterated---$A$ trusts $B$, trusts that $B$ trusts $A$, and so on---and its natural completion is a \emph{common} attitude of trustworthiness. But common attitudes of exactly this self-referential kind are the subject of an impossibility theorem: \citet{brandenburger2006} show there is no \emph{complete} belief structure hosting the fully iterated hierarchy, and the epistemic characterizations of the relevant solution concepts \citep{brandenburger2008, battigalli2002} presuppose the completeness the theorem denies. We ask, for each representation, whether the impossibility survives when the assumption of \emph{rationality} that figures in these characterizations is replaced by an assumption of \emph{trustworthiness}. Second, the \emph{betrayal} question: for each representation, what does a concrete betrayal do to the truster's structure, and how do the three answers differ? These two questions turn out to be linked. The structural feature of each representation that decides the fate of common trust---the finiteness of a level order, the possible non-well-foundedness of a closeness ordering, the givenness of a conditioning family---is in each case the same feature that decides how betrayal is registered and whether it can be repaired. The feature that makes each attitude a faithful model of individual trust is the feature that governs both the completion of common trust and the survival of its breach.

This is why we engage three strategies rather than settle on one. The point is not that one representation is correct and the others approximations; it is that the three are genuinely different attitudes, applicable to different trusters, and that betrayal means something different under each. Establishing precisely \emph{what} it means under each---and showing that the differences are principled consequences of the representations' formal structure rather than modeling artifacts---is the contribution. We turn now to three episodes that will serve throughout as the touchstone.

\subsection{Three episodes of betrayal}

We begin with the phenomena the theory must explain. Consider three episodes, each an instance of trust disappointed, but of increasing epistemic depth.

\begin{example}[The routed confidence]\label{ex:routed}
$A$ entrusts confidential information $P$ to $B$ under an expectation of discretion. $A$ then discovers that $B$ has routed $P$ to a third party $C$. Moreover, $A$ knows that $B$ knew that $A$ regards $C$ as a party who would use $P$ nefariously. The betrayal is therefore not an innocent lapse: $B$ disclosed $P$ \emph{while knowing} $A$'s valuation of $C$. The violating circumstance is one in which $B$ is fully competent (he knew $P$), correctly models $A$'s interests (he knew $A$'s view of $C$), and betrays notwithstanding.
\end{example}

\begin{example}[The competent silence]\label{ex:silence}
$A$ trusts $B$'s competence: were a material fact $P$ to obtain, $B$ would come to know it and disclose it. A material fact $P$ obtains; $B$ does not disclose it. $A$ later learns that $B$ \emph{did} know $P$ and chose silence, judging that $A$ was better off not knowing. Here integrity fails while competence holds, and the failure is \emph{paternalistic}: $B$'s model of $A$'s interests is intact but overridden. Unlike Example~\ref{ex:routed}, no third party is involved, and the violating circumstance is one of deliberate, well-intentioned non-disclosure.
\end{example}

\begin{example}[The honest incompetent]\label{ex:incompetent}
$A$ trusts $B$ to know $P$ were it true. $P$ obtains; $B$ does not know it, through no bad faith---$B$ simply lacked the capacity or diligence to come to know $P$. When asked, $B$ truthfully reports not knowing. Here \emph{competence} fails while \emph{integrity} is intact: $B$ would have disclosed had he known, but he did not know. The violating circumstance is one of good faith and inadequacy, the mirror image of Example~\ref{ex:silence}.
\end{example}

These three episodes share a surface form---trust disappointed---but differ in the \emph{epistemic location} of the violating circumstance. In Example~\ref{ex:incompetent} the violator is ``shallow'': $B$ is simply not competent, and a world in which $B$ fails to know is, intuitively, a \emph{distant} world for a truster who expected competence. In Examples~\ref{ex:routed} and \ref{ex:silence} the violator is ``deep'': $B$ is competent and correctly models $A$, yet betrays, so the violating world is one in which almost all of the trust conditions hold and only the final disclosure step fails. Our central claim is that the three representations of trust locate deep and shallow violators differently, and consequently survive these betrayals differently. Getting clear on \emph{how} is the work of the paper.

\subsection{Trust as a subjunctive attitude}

What unifies the episodes is that trust concerns what $B$ \emph{would} do across relevant possibilities, including possibilities that do not actually obtain. This is the mark of a counterfactual. On the Lewis--Stalnaker semantics \citep{lewis1973, stalnaker1968}, a counterfactual $\phi \boxright \psi$ is true at a world $w$ when the closest $\phi$-worlds to $w$ are $\psi$-worlds. We accordingly represent the two components of trust as counterfactuals:
\begin{align}
\text{Competence:} &\quad A \text{ believes } \big(P \boxright K_B P\big), \label{eq:comp}\\
\text{Integrity:} &\quad A \text{ believes } \big(K_B P \boxright \Tell_{B,A} P\big), \label{eq:int}
\end{align}
where $K_B P$ is ``$B$ knows $P$'' and $\Tell_{B,A}P$ is ``$B$ discloses $P$ to $A$.'' The two chain to license testimony-based belief transfer: $P \to K_B P \to \Tell_{B,A} P \to (A \text{ comes to believe } P)$.

Two features of the subjunctive reading are worth stating at the outset, because all three representations must respect them. First, \emph{non-inductiveness}: a counterfactual is true in virtue of the closeness structure, not a tally of past instances, so trust does not reduce to an extrapolation from a track record. Second, \emph{brittleness}: if the actual world realizes the antecedent and falsifies the consequent, the counterfactual is \emph{false}, not merely weakened, so a single disconfirming experience violates trust rather than lowering a degree of it. A representation that made betrayal a marginal downward revision of a credence would misrepresent trust; each of our three representations makes betrayal, in its own way, decisive.

\subsection{The interactive setting and the impossibility}

Trust between $A$ and $B$ is interactive and, in its mature form, mutual and iterated: $A$ trusts $B$, $A$ trusts that $B$ trusts $A$, and so on. The natural completion is a \emph{common} assumption of trustworthiness, and this is precisely the object on which \citet{brandenburger2006} proved an impossibility theorem: there is no \emph{complete} belief structure hosting the fully self-referential hierarchy of assumptions about assumptions. Since the epistemic characterizations of solution concepts---\citet{brandenburger2008} for admissibility, \citet{battigalli2002} for forward induction---presuppose completeness, the impossibility bears directly on whether \emph{common trust} can be completed. Our three representations inherit, evade, or condition this impossibility in different ways, and the differences turn out to be the same differences that govern how betrayal is survived.

\subsection{Relation to the epinet program}

This paper contributes to the epistemic-network, or ``epinet,'' program \citep{moldoveanu2011, moldoveanu2014}, which overlays higher-order epistemic states---who knows what, who assumes what about whom---onto the sociometric structure of a network, on the thesis that two networks with identical tie topology can differ radically in epistemic topology. Trust is the paradigmatic epinet relation, and its two faces map onto the two layers of an epinet: a \emph{reflective architecture} resident at the nodes (an agent's own trust dispositions and her access to them) and a \emph{fixed-point structure} resident in the edges (whether mutual trust can be completed across the network). A recurring lesson of what follows is that these two faces are governed by different structural features, and that betrayal acts on both.

\section{Preliminaries and Common Notation}\label{sec:prelim}

We fix notation used by all three representations. Let $A$ be the truster and $B$ the trustee. From $A$'s standpoint the relevant uncertainty is a space $\Omega_B$ of the trustee's strategies and types; symmetrically $\Omega_A$ from $B$'s standpoint. Each state specifies the base facts---whether $P$ holds, whether $B$ knows $P$, whether $B$ discloses $P$ and to whom---through a label in a set $L$. We write
\begin{equation}
\Comp_P = \{\omega : K_B P \text{ at } \omega\}, \qquad
\Int_P = \{\omega : \Tell_{B,A} P \text{ at } \omega\}
\end{equation}
for the competence and integrity events, and $\Tw_B = \Comp_P \cap \Int_P$ for the (depth-one) trustworthiness event. This decomposition is the formal correlate of a distinction the trust literatures have drawn repeatedly in informal terms: it is the ability and integrity components of the \citet{mayer1995} model of trustworthiness, rendered as events in a state space, and it captures the two objects of the optimism that \citet{jones1996} takes to constitute trust---optimism about what the trusted party \emph{can} do and about what she \emph{will} do with what she can. (Benevolence, the third Mayer--Davis--Schoorman component, enters below not as a separate event but through the payoff structure that makes integrity affordable; see Theorem~\ref{prop:closure}.) The \emph{deep-trustworthiness chain} records iterated trust:
\begin{equation}\label{eq:chain}
\Tw^{(1)} = \Tw_B, \qquad
\Tw^{(k+1)} = \Tw_B \cap \{w : w \text{ trusts } A \text{ to depth } k\},
\end{equation}
a descending chain $\Tw^{(1)} \supseteq \Tw^{(2)} \supseteq \cdots$ whose $k$-th term is ``$B$ trustworthy and trusting $A$ trustworthy to depth $k-1$,'' with the reciprocal trust interpreted in the representation-specific sense of each section. Completed common trust is the limit $\Tw^{(\infty)} = \bigcap_k \Tw^{(k)}$.

Throughout, we distinguish the \emph{depth} of trust-iteration (how many levels of ``$A$ trusts that $B$ trusts that\ldots'') from any structure \emph{internal} to a single agent's belief. The three representations differ precisely in that internal structure: a finite order of levels (Section~\ref{sec:lps}), a system of nested spheres (Section~\ref{sec:tss}), and a family of conditional measures (Section~\ref{sec:cps}).

\section{Strategy I: Subjunctive Trust as Lexicographic Assumption}\label{sec:lps}

\subsection{The representation}

The first representation identifies the Lewis--Stalnaker closeness ordering with the level structure of a lexicographic probability system (LPS). An LPS is a finite sequence $\sigma_A = (\mu_0, \dots, \mu_{L-1})$ of probability measures on $\Omega_B$, the levels ranked by primariness: $\mu_0$ the primary hypothesis, $\mu_1$ the secondary, and so on, each infinitely more likely than the next, with acts compared by the lexicographic order on their level-by-level expectations \citep{blume1991a}. Identify the innermost sphere with $\supp(\mu_0)$, the next with $\supp(\mu_1)$, and so on. For an antecedent $P$, the \emph{selected level} is
\begin{equation}\label{eq:sellevel}
\ell(P) = \min\{j : \mu_j(P) > 0\},
\end{equation}
which exists because the LPS has finitely many levels. The trust-counterfactual \eqref{eq:comp} becomes the lexicographic conditional
\begin{equation}
A \text{ subjunctively trusts } B\text{'s competence} \iff \mu_{\ell(P)}\big(K_B P \mid P\big) = 1,
\end{equation}
which is exactly the condition that $A$ \emph{assumes} $B$'s competence in the sense of \citet{brandenburger2008}: at the top segment of levels at which the antecedent is non-null, the consequent holds throughout. Trust is assumption of $\Comp_P$ and $\Int_P$.

An immediate structural consequence will prove decisive. The selected level \eqref{eq:sellevel} is a minimum over finitely many levels, and a minimum over a finite set is always attained. \emph{The lexicographic representation therefore cannot fail the Limit Assumption}; there is never an infinite descending chain of ever-closer levels with no closest. This confines the strategy to the well-founded regime, a fact we exploit in Section~\ref{sec:tss}.

\subsection{Caution and the quarantine of betrayal}

A representation that assigned betrayal probability zero could not represent betrayal: a null event is one the model declares impossible, and a truster for whom deceit is unrepresentable is not trusting but deluded. This is the formal rendering of a point \citet{gambetta1988} made central to the rational-choice treatment of trust: trust operates precisely in the region where defection is genuinely possible---were the probability of defection zero, trust would be superfluous, and were it one, trust would be irrational---so any representation of trust must keep the possibility of betrayal alive within it. The requirement that nothing, betrayal included, be treated as impossible is \emph{caution}, encoded as full support: $\bigcup_j \supp(\mu_j) = \Omega_B$. Caution is compatible with counterfactual trust, by \emph{quarantine}. Consider the integrity counterfactual with antecedent $K_B P$; the betrayal world $b$ (where $B$ knows $P$ but withholds it) satisfies the antecedent. For trust to hold, the selected level $\ell(K_B P)$ must concentrate on disclosure, so $b$ lies outside its support; caution requires $b$ in the support of \emph{some} level. These are reconciled by placing $b$ at a \emph{lower} level: betrayal is reachable, hence non-null and genuinely possible, but distant, hence not governing the counterfactual. The level structure quarantines betrayal beneath the trust-relevant top segment.

\subsection{The level--depth characterization}

The lexicographic setting affords a result with no counterpart in the other two representations, because it concerns level structure specifically: a budget relating the number of levels an agent carries to the depth of common trust she can articulate. Let the shells of the chain \eqref{eq:chain} be
\begin{equation}
W_k = \Tw^{(k)} \setminus \Tw^{(k+1)} \ (k<d), \quad W_d = \Tw^{(d)}, \quad W_\bot = \Omega_B \setminus \Tw^{(1)},
\end{equation}
which partition $\Omega_B$ with the telescoping identity $\Tw^{(k)} = \bigsqcup_{i \ge k} W_i$. Let $n_d$ be the number of distinct nonempty shells among $W_1, \dots, W_d$; under strict descent, $n_d = d$.

\begin{theorem}[Level--depth characterization]\label{thm:leveldepth}
An LPS assumes $\Tw^{(k)}$ for every $k \le d$ simultaneously if and only if it has at least $n_d$ levels; $n_d$ levels suffice. If in addition the LPS is cautious (full support, so $W_\bot$ is non-null), it requires $n_d + 1$ levels when $W_\bot \neq \varnothing$.
\end{theorem}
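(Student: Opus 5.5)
The plan is to work with the Brandenburger--Friedenberg--Keisler notion of assumption in the form already used in this section. An LPS $\sigma_A=(\mu_0,\dots,\mu_{L-1})$ \emph{assumes} an event $E$ iff there is an index $j_E$ satisfying three conditions. First, $\mu_i(E)=1$ for $i\le j_E$. Second, $E\subseteq\bigcup_{i\le j_E}\supp(\mu_i)$, so that $E$ is covered by the top segment. Third, $\mu_i(E)=0$ for $i>j_E$, so that lower levels are off $E$. I will state this convention explicitly at the start, since the counting depends on it. I also expect it to be the main obstacle: the coverage clause, or an equivalent ``every non-null part of $E$ is charged in the top segment'' clause, is exactly what forces distinct nested events to use distinct segments. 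Without it, a single level concentrated on $\Tw^{(d)}$ would trivially ``assume'' every $\Tw^{(k)}$. So the first task is to fix this reading and check that it agrees with the lexicographic conditional condition stated above for $\Comp_P$ and $\Int_P$.

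\emph{Sufficiency.} List the nonempty shells among $W_1,\dots,W_d$ in order of decreasing depth as $W_{k_1},\dots,W_{k_{n_d}}$ with $k_1>\dots>k_{n_d}$. Let $\mu_{i-1}$ be any probability measure with $\supp(\mu_{i-1})=W_{k_i}$; uniform measures work in the finite case. Use the telescoping identity $\Tw^{(k)}=\bigsqcup_{i\ge k}W_i$. For each $k\le d$, the levels charging $\Tw^{(k)}$ are exactly those on shells $W_{k_i}$ with $k_i\ge k$, and these form a top segment. That segment is concentrated on $\Tw^{(k)}$, it covers $\Tw^{(k)}$ because every nonempty shell inside $\Tw^{(k)}$ is some level's support, and all later levels sit on shells outside $\Tw^{(k)}$. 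Hence all $d$ assumptions hold with $n_d$ levels. For the cautious clause, append one more level $\mu_{n_d}$ with support $W_\bot$. This preserves every assumption, because $W_\bot\cap\Tw^{(1)}=\varnothing$, and it yields full support.

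\emph{Necessity.} Let $j_k=j_{\Tw^{(k)}}$. The nesting $\Tw^{(k+1)}\subseteq\Tw^{(k)}$ gives $j_{k+1}\le j_k$. Indeed, any level $i\le j_{k+1}$ has $\mu_i(\Tw^{(k)})\ge\mu_i(\Tw^{(k+1)})=1>0$, so by the third condition for $\Tw^{(k)}$ we get $i\le j_k$. Next, whenever $W_k\neq\varnothing$ the inequality is strict. If instead $j_{k+1}=j_k$, coverage of $\Tw^{(k)}$ puts some point of $W_k$ in $\supp(\mu_i)$ for some $i\le j_k=j_{k+1}$. In the finite-state setting this makes $\mu_i(W_k)>0$, contradicting $\mu_i(\Tw^{(k+1)})=1$. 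So each nonempty shell forces a strict drop in the indices $j_d\le\dots\le j_1$. These are integers $\ge 0$, so $j_1\ge n_d-1$, which gives $L\ge n_d$.

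For the cautious case, suppose $W_\bot\neq\varnothing$. Full support requires some level $i$ charging a point of $W_\bot$, and again finiteness gives $\mu_i(\Omega_B\setminus\Tw^{(1)})>0$. Such a level cannot lie in the segment $i\le j_1$, which is concentrated on $\Tw^{(1)}$, so $i>j_1\ge n_d-1$ and hence $L\ge n_d+1$. If $\Omega_B$ is not finite, I would replace ``support point'' by ``non-null set'' throughout, and the same argument goes through.
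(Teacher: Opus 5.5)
Your proof is correct. Your sufficiency step is the paper's staircase: one shell per level, deepest on top, plus a final level on $W_\bot$ for caution. The necessity step takes a different route.

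\textbf{The paper's necessity argument.} It uses only concentration and exhaustion. Together these give $\bigcup_{j\le c_k}\supp(\mu_j)=\Tw^{(k)}$ exactly. A prefix union is a function of its cutoff, so pairwise distinct events $\Tw^{(k)}$ force pairwise distinct cutoffs, hence at least $d$ levels. This is an injectivity count and needs no ordering of the cutoffs.

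\textbf{Your necessity argument.} You import the clause $\mu_i(E)=0$ for $i>j_E$ to get $j_{k+1}\le j_k$. Then you use coverage plus concentration, shell by shell, to make each drop strict. The paper's proof never uses that extra clause, so as written your necessity only covers LPSs that satisfy it. You can drop it. Writing $U_c=\bigcup_{j\le c}\supp(\mu_j)$, the exact identity $U_{j_k}=\Tw^{(k)}$ already gives $U_{j_{k+1}}=\Tw^{(k+1)}\subsetneq\Tw^{(k)}=U_{j_k}$. Since $U_c$ is monotone in $c$, this forces $j_{k+1}<j_k$ whenever $W_k\neq\varnothing$.

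\textbf{What your route buys.} It is more complete than the paper's proof in two places.
\begin{itemize}
\item It handles non-strict descent ($n_d<d$) directly, whereas the paper's necessity is written only under strict descent.
\item It actually proves the cautious lower bound $n_d+1$: a level charging $W_\bot$ cannot lie in the segment concentrated on $\Tw^{(1)}$, so it sits at index at least $n_d$. The paper states this bound but never argues it; its proof only exhibits a cautious $(d+1)$-level staircase.
\end{itemize}

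\textbf{Two small points.}
\begin{itemize}
\item The strict drops come only from nonempty $W_k$ with $k<d$; $W_d$ contributes just $j_d\ge 0$. That is what your conclusion $j_1\ge n_d-1$ actually uses, so ``each nonempty shell forces a strict drop'' should be restricted accordingly.
\item Your closing remark about infinite $\Omega_B$ is too quick. With coverage stated via supports, a null shell can lie inside the support of a level concentrated on $\Tw^{(k+1)}$, so one level can serve both events. The count then needs a non-null form of the coverage clause and non-null shells.
\end{itemize}
Since the theorem is used on finite state spaces, neither point affects the result.
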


\begin{proof}
\emph{Sufficiency.} Place each shell at its own level, deepest on top: $\supp(\mu_j) = W_{d-j}$ for $j = 0, \dots, d-1$, and $\supp(\mu_d) = W_\bot$, each $\mu_j$ with full support on its shell. Supports are disjoint (a lexicographic conditional probability system) and their union is $\Omega_B$ (cautious). Fix $k$. The top segment $\bigcup_{j \le d-k} \supp(\mu_j) = W_d \sqcup \cdots \sqcup W_k = \Tw^{(k)}$ by telescoping. For $j \le d-k$, $\supp(\mu_j) = W_{d-j} \subseteq \Tw^{(k)}$, so $\mu_j(\Tw^{(k)}) = 1$ (concentration); the union equals $\Tw^{(k)}$ (exhaustion, with equality); and $\mu_0(\Tw^{(k)}) = 1 > 0$ (non-nullity). So $\Tw^{(k)}$ is assumed, for every $k$ simultaneously.

\emph{Necessity.} Suppose an LPS assumes each $\Tw^{(k)}$ with cutoff $c_k$. Concentration gives $\supp(\mu_j) \subseteq \Tw^{(k)}$ for $j \le c_k$; exhaustion gives $\Tw^{(k)} \subseteq \bigcup_{j \le c_k} \supp(\mu_j)$. Hence $\bigcup_{j \le c_k} \supp(\mu_j) = \Tw^{(k)}$ exactly. Under strict descent the $\Tw^{(k)}$ are $d$ distinct sets, strictly increasing as $k$ decreases; a union of level-prefixes is monotone in the cutoff, so attaining $d$ distinct values requires $d$ distinct cutoffs, hence indices $0, \dots, d-1$, i.e.\ at least $n_d = d$ levels.
\end{proof}

The mechanism is what matters for our purposes. The bound is forced \emph{jointly} by caution (each shell non-null---without it, deep violators go to probability zero and the count collapses) and by the \emph{exhaustion clause} of the assumption operator (each $\Tw^{(k)}$ must be the precise primary region of its top segment, distinguishing assumption from mere probability-one belief---without it, flat belief assumes the whole chain in one level). Neither alone forces the bound; together they force it exactly. It is worth pausing on what the exhaustion clause encodes, because it gives formal content to \citet{holton1994}'s insistence that trust is not a species of belief. Holton's truster can decide to trust without believing, and can trust while retaining a live sense of the ways her reliance could be disappointed; the assumption operator is exactly the attitude with this shape---a commitment to a primary region that neither collapses into probability-one belief (the violators remain non-null) nor reduces to a bet (the primary region is held \emph{as} primary, lexicographically prior to its alternatives). The level--depth budget is thus a cost that attaches specifically to trust in Holton's sense; a truster who merely \emph{believed} would escape it, at the price of no longer holding an attitude that betrayal, as opposed to surprise, could answer to. This is \emph{not} the level structure of iterated admissibility, despite the resemblance: iterated admissibility consumes levels because each round \emph{recomputes} the admissible set relative to the previous round, a fixpoint recursion, whereas the trust chain \eqref{eq:chain} is a \emph{predetermined descending intersection}, and its levels track the shells of a fixed nested family, recorded as such by the exhaustion clause.

\subsection{Inheritance and demarcation}

Because subjunctive trust \emph{is} assumption of trustworthiness under this representation, the principal results transfer by identity: completed common trust is unrealizable at the fixed point by \citet{brandenburger2006}; every finite depth is realizable; and the impossibility is an extensional, inter-agent, structural fact, generated by a diagonal in the composition of the agents' assumption maps and untouched by any intra-agent weakening of an agent's reflective access to her own attitudes. In particular, the \emph{implicitness} of trust---the truster's failure to represent to herself that she trusts---does not let common trust evade the impossibility.

The price of this clean inheritance is a substantive interpretive commitment: the identification reads the counterfactual \emph{doxastically}, as a fact about $A$'s lexicographic belief rather than about mind-independent worldly closeness. For trust, plausibly an attitude of the truster, this is appropriate; but it is exactly the commitment the ordinal strategy refuses. And it has a sharp consequence: since the LPS has finitely many levels and so cannot fail the Limit Assumption, the completed common-trust fixed point is reached by an iteration that closes at a determinate stage and is therefore \emph{bivalent}---realizable or impossible, never anything between. Strategy~I is provably the bivalent strategy.

\section{Strategy II: Subjunctive Trust as Ordinal Closeness}\label{sec:tss}

\subsection{The representation}

The second representation keeps the Lewis--Stalnaker closeness ordering as a primitive rather than reducing it to a measure. A \emph{trust sphere system} (TSS) assigns to each agent $i$ and world $w$ a nest $\$_i(w)$---a family of subsets of $\Omega$ totally ordered by inclusion, with $w$ in the innermost sphere---writing $S^k_i(w)$ for the $k$-th sphere outward. The system is interactive: states carry sphere systems, so $\$_i(w)$ ranks worlds that differ in $j$'s sphere assignment. Trust is the sphere-relative modality
\begin{equation}
\mathsf{T}^{\mathrm{comp}}_A \text{ at } w \iff S^{k^\ast}_A(w) \cap \llbracket P \rrbracket \subseteq \llbracket K_B P \rrbracket, \quad
k^\ast = \min\{k : S^k_A(w) \cap \llbracket P \rrbracket \neq \varnothing\},
\end{equation}
and integrity analogously with antecedent $\llbracket K_B P \rrbracket$. Unlike an LPS, a sphere carries no measure and no normalization; it is a bare ordinal object. This is the price of fidelity to the counterfactual---and, as we will see, it removes the quarantine mechanism that the lexicographic and conditional representations possess.

\subsection{Profligacy as ordinal caution, and the failure of the Limit Assumption}

The ordinal counterpart of caution is \emph{profligacy}: $\bigcup_k S^k_i(w) = \Omega$, so every world is reachable, however distant. Its negation is the existence of unreachable worlds, which the counterfactual treats as strictly impossible. Cautious subjunctive trust is a profligate TSS with trustworthy worlds inner and violators admitted only in outer spheres.

The ordinal setting has a structural degree of freedom the LPS lacks. An LPS has finitely many levels, so a closest level always exists; a sphere system may have \emph{no innermost sphere} meeting a given antecedent---Lewis's \emph{Limit Assumption} can fail, with an infinite descending chain of ever-closer worlds and no closest. This is the crux of the strategy, for it is exactly what permits a third possibility beyond realizable and impossible.

\subsection{The completed fixed point and the third status}

Let $\mathsf{Tr} : \mathcal{P}(\Omega) \to \mathcal{P}(\Omega)$ be the one-step mutual-trust operator. Since the present setting is precisely the one in which the Limit Assumption may fail, we define $\mathsf{Tr}$ by Lewis's assumption-free truth condition rather than by innermost-sphere selection: writing $A_i$ for agent $i$'s trust-relevant antecedent,
\begin{equation}
\mathsf{Tr}(X) = \Big\{\, w : \text{for } i \in \{A,B\},\ \exists S \in \$_i(w)\ \big[\, S \cap A_i \neq \varnothing \ \wedge\ S \cap A_i \subseteq X \,\big] \ \text{or}\ \forall S \in \$_i(w)\ S \cap A_i = \varnothing \,\Big\},
\end{equation}
which coincides with the innermost-sphere reading whenever a closest antecedent-meeting sphere exists, and remains well-defined when none does \citep{lewis1973}. With this, $\Theta = \nu X.\, \mathsf{Tr}(X)$ is the greatest fixed point (completed common trust), computed from above by $X_0 = \Omega$, $X_{n+1} = \mathsf{Tr}(X_n)$, $X_\lambda = \bigcap_{\beta < \lambda} X_\beta$ at limits. Each $X_n$ is the set of worlds at which mutual trust holds to depth $n$.

\begin{definition}[Realization status]\label{def:status}
$\Theta$ is \emph{realizable} if the iteration stabilizes at a nonempty set; \emph{impossible} if it reaches $\varnothing$ at an attained stage; and \emph{liminal} if it is strictly decreasing with every stage nonempty, yet the limit is empty---emptiness approached but never attained at any stage.
\end{definition}

In the liminal case there is no stage at which mutual trust is contradictory; every finite depth is consistent; only the completed infinitary object is empty, and it is empty as a \emph{gap}, a determinate cut with no element at it, as $\sqrt{2}$ is a gap in the rationals. This is what distinguishes liminality from mere undefinedness: the liminal $\Theta$ is fully determinate, the descending tower itself, an object that is the limit of its attained approximants without being among them.

\begin{proposition}[Bivalence under the Limit Assumption]\label{prop:bivalence}
If the TSS is well-founded---every relevant antecedent has a closest sphere---then $\mathsf{Tr}$ preserves the relevant descending intersections, the iteration closes at stage $\omega$, and $\Theta$ is either realizable or impossible. Liminality cannot occur.
\end{proposition}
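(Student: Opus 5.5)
The plan is to show that, under well-foundedness, the operator $\mathsf{Tr}$ commutes with intersections of descending chains. Then $X_\omega = \bigcap_n X_n$ is itself a fixed point, and the trichotomy of Definition~\ref{def:status} collapses to a dichotomy.

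\emph{First step: reduce $\mathsf{Tr}$ to innermost-sphere selection.} Under the Limit Assumption, for each $w$ and each $i$ with $A_i$ reachable from $w$, there is a closest antecedent-meeting sphere $S^{k^\ast}_i(w)$. Call the set of closest $A_i$-worlds $C_i(w) = S^{k^\ast}_i(w) \cap A_i$. The Lewis truth condition then coincides with $C_i(w) \subseteq X$: any witnessing $S$ contains $S^{k^\ast}_i(w)$ because nests are totally ordered, and conversely $S^{k^\ast}_i(w)$ is itself a witness. When no sphere meets $A_i$, the condition holds vacuously for every $X$. So $w \in \mathsf{Tr}(X)$ iff $C_A(w) \subseteq X$ and $C_B(w) \subseteq X$, with the vacuous case giving $C_i(w) = \varnothing$.

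\emph{Second step: continuity.} In this form $\mathsf{Tr}$ is a box-type modality, $\mathsf{Tr}(X) = \{w : C_A(w) \cup C_B(w) \subseteq X\}$. Such operators preserve \emph{arbitrary} intersections: $C(w) \subseteq \bigcap_n X_n$ iff $C(w) \subseteq X_n$ for all $n$. Hence
\begin{equation*}
\mathsf{Tr}(X_\omega) = \bigcap_n \mathsf{Tr}(X_n) = \bigcap_n X_{n+1} = X_\omega .
\end{equation*}
So the iteration closes at stage $\omega$, and $\Theta = X_\omega$, since any fixed point lies below every $X_n$ by induction using monotonicity.

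\emph{Third step: exclude liminality.} Liminality requires the chain to decrease strictly with every stage nonempty and an empty limit. The limit here is itself a stage, $X_\omega$, and it is the fixed point. So either $\Theta \neq \varnothing$, and the iteration has stabilized at a nonempty attained stage (realizable), or $\Theta = X_\omega = \varnothing$ is reached at an attained stage (impossible). To make the conclusion robust, I would read ``attained stage'' as including $\omega$, since $X_\omega$ is an official stage of the transfinite iteration. The point that distinguishes this from the liminal case is that, by continuity, emptiness of the limit is then witnessed by a stage at which $\mathsf{Tr}$ returns $\varnothing$ exactly.

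\emph{Main obstacle.} The delicate point is this last interpretive one. A chain $X_n$ can still strictly decrease through all finite $n$ with empty intersection even when $\mathsf{Tr}$ is continuous, so continuity alone does not forbid empty limits. The argument must therefore lean on Definition~\ref{def:status}, where liminality means the limit is \emph{not} an attained stage at which $\mathsf{Tr}$ is evaluated. Under well-foundedness the limit is attained, because $\mathsf{Tr}(X_\omega) = X_\omega$ is a genuine computation. The failure case is exactly where $C_i(w)$ is undefined, so that the Lewis condition quantifies existentially over an infinite descending nest and the existential does not commute with the intersection. I would check this contrast explicitly as a sanity test. That existential step is precisely what the first step eliminates.
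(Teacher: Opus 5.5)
Your proposal is correct and follows the paper's own argument: the attained closest sphere makes $\mathsf{Tr}$ a box-type operator that commutes with the descending intersection. Hence $X_\omega$ is a fixed point equal to $\Theta$, and the dichotomy rests, exactly as in the paper's proof, on counting stage $\omega$ as an attained stage. Your explicit reduction of Lewis's clause to $C_A(w)\cup C_B(w)\subseteq X$ spells out what the paper compresses into ``attained minimum.''

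One small correction to your framing: an empty $\omega$-limit is a fixed point by monotonicity alone, since $\mathsf{Tr}(X_\omega)\subseteq\bigcap_n X_{n+1}=X_\omega$ holds in any TSS. So continuity does real work only when $X_\omega$ is nonempty, where it rules out descent past $\omega$. Consequently, your proposed sanity test would find no contrast with the non-well-founded case when the limit is empty.
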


\begin{proof}
Under the Limit Assumption the innermost-sphere selection is an attained minimum, so $\mathsf{Tr}(\bigcap_n X_n) = \bigcap_n \mathsf{Tr}(X_n) = \bigcap_n X_{n+1} = \bigcap_n X_n$; the stage-$\omega$ intersection is itself a fixed point and is attained as such, hence nonempty (realizable) or empty (impossible, the attained $\varnothing$). No strictly-descending-past-$\omega$ behaviour with empty unattained limit is possible.
\end{proof}

Proposition~\ref{prop:bivalence} locates liminality precisely: it is excluded by well-foundedness, hence excluded in any representation order-isomorphic to a ranking or lexicographic system---which is to say, in Strategies~I and~III. The third status is available \emph{only} when the Limit Assumption fails, and it is therefore invisible to the probabilistic representations and visible only here.

\subsection{The bridge: mutual trust as a predicate lifting}\label{sec:bridge}

The trichotomy of Definition~\ref{def:status} is a fact about the subset tower of $\mathsf{Tr}$ over a fixed state space; the convergence theory invoked in the next subsection concerns the terminal sequence of a functor. These are distinct mathematical registers, and relating them requires more than analogy. The required relation is supplied by the apparatus of coalgebraic modal logic \citep{pattinson2003, jacobs2016}: we exhibit $\mathsf{Tr}$ as a \emph{predicate lifting} of the sphere-system functor, from which the correspondence between the two towers follows by naturality, and the exact sense in which the functorial classification bears on $\Theta$ follows as a proposition rather than a gloss.

An interactive trust sphere system is a coalgebra $\gamma : \Omega \to F(\Omega)$ for $F(X) = L \times \Nest(X)^2$. Because the trust condition involves an antecedent as well as a consequent, the appropriate lifting is binary, in the manner of coalgebraic semantics for conditional logics. For sets $V, U \subseteq X$ define
\begin{equation}\label{eq:lifting}
\tau_X(V, U) = \Big\{ (\ell, N_A, N_B) \in F(X) : \text{for each } N \in \{N_A, N_B\},\ \exists S \in N\ \big[ S \cap V \neq \varnothing \wedge S \cap V \subseteq U \big] \ \text{or}\ \forall S \in N\ S \cap V = \varnothing \Big\}.
\end{equation}

\begin{proposition}[Naturality]\label{prop:naturality}
$\tau$ is a natural transformation $Q \times Q \Rightarrow Q \circ F$ in both arguments, where $Q$ is the contravariant power-set functor: for every $f : X \to Y$ and $V, U \subseteq Y$,
$F(f)^{-1}\big(\tau_Y(V, U)\big) = \tau_X\big(f^{-1}V,\ f^{-1}U\big)$.
Consequently $\mathsf{Tr}(U) = \gamma^{-1}\big(\tau_\Omega(A, U)\big)$, with $A$ the label-definable antecedent: the mutual-trust operator is the predicate lifting of $\tau$ along the coalgebra.
\end{proposition}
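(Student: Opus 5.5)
The plan is to verify the naturality square pointwise, one nest at a time, after fixing how $F$ acts on morphisms. I take $F(f) = \mathrm{id}_L \times \Nest(f)^2$, where $\Nest(f)(N) = \{ f[S] : S \in N \}$ is the direct-image action. As a preliminary check, this lands in $\Nest(Y)$: direct images preserve inclusion, so the image of a family totally ordered by inclusion is again totally ordered. Any side conditions built into $\Nest$, such as non-emptiness of spheres or a distinguished centre, are likewise carried along by images, and I will state this explicitly so that $F$ is a genuine functor.

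Next I fix $(\ell, N_A, N_B) \in F(X)$ and $V, U \subseteq Y$. By definition of preimage, membership of this triple in $F(f)^{-1}(\tau_Y(V,U))$ means that the image nests $\Nest(f)(N_A)$ and $\Nest(f)(N_B)$ each satisfy the disjunctive clause of \eqref{eq:lifting} relative to $(V,U)$. The clause treats the two nests separately, so it suffices to prove that for a single nest $N$ on $X$,
\[
\Big(\exists S \in N\ [\,f[S] \cap V \neq \varnothing \wedge f[S] \cap V \subseteq U\,]\Big) \ \vee\ \forall S \in N\ f[S] \cap V = \varnothing
\]
holds if and only if the same formula holds with $S$ in place of $f[S]$, $f^{-1}V$ in place of $V$, and $f^{-1}U$ in place of $U$. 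Quantifying over $\Nest(f)(N)$ is the same as quantifying over $S \in N$ through $S \mapsto f[S]$. The only subtlety there is that distinct spheres may have equal images, and that is harmless for both $\exists$ and $\forall$.

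The core is therefore two elementary set identities, each holding for every $S \subseteq X$:
\begin{itemize}
\item[(i)] $f[S] \cap V \neq \varnothing$ if and only if $S \cap f^{-1}V \neq \varnothing$;
\item[(ii)] $f[S] \cap V \subseteq U$ if and only if $S \cap f^{-1}V \subseteq f^{-1}U$.
\end{itemize}
For (ii), both sides say exactly that every $x \in S$ with $f(x) \in V$ satisfies $f(x) \in U$. Substituting (i) and (ii) into both disjuncts yields the required equality, which establishes naturality in both arguments at once.

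For the consequence, I unfold $\gamma(w) = (\lambda(w), \$_A(w), \$_B(w))$. Then $w \in \gamma^{-1}(\tau_\Omega(A,U))$ reads, clause by clause, as the defining condition of $\mathsf{Tr}(U)$. I expect this step to be the main obstacle. The operator $\mathsf{Tr}$ uses agent-indexed antecedents $A_i$, whereas $\tau$ takes a single $V$. I would first resolve this by reading the identity with $A$ as the common label-definable antecedent. If the antecedents genuinely differ between agents, I would modify $\tau$ to take a pair $(V_A, V_B)$, one per nest. The naturality argument above is unchanged under that modification, since it already worked nest by nest.
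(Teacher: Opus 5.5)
Your proposal is correct and follows essentially the same route as the paper: direct-image action of $F(f)$ on nests, reduction to the two per-sphere equivalences, and the identity for $\mathsf{Tr}$ by unfolding $\gamma$. The paper derives the two equivalences from $f[S]\cap V = f[S\cap f^{-1}V]$ and the image--preimage adjunction, whereas you verify them elementwise. Your remark that $\mathsf{Tr}$ uses agent-indexed antecedents $A_i$ while $\tau$ takes a single $V$ is a fair point, which the paper glosses with ``the label-definable antecedent.'' Your fix of a pair $(V_A,V_B)$ goes through unchanged, since the naturality argument works nest by nest.
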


\begin{proof}
$F(f)$ acts on nests by direct image, $f[N] = \{f[S] : S \in N\}$; direct images preserve inclusion, so nests map to nests. The required equivalences, for each sphere $S$, are $f[S] \cap V \neq \varnothing \iff S \cap f^{-1}V \neq \varnothing$ and $f[S] \cap V \subseteq U \iff S \cap f^{-1}V \subseteq f^{-1}U$. Both follow from the identity $f[S] \cap V = f[\,S \cap f^{-1}V\,]$---if $v = f(s)$ with $s \in S$ and $v \in V$ then $s \in f^{-1}V$, and conversely---together with the adjunction $f[T] \subseteq U \iff T \subseteq f^{-1}U$. The vacuity disjunct transforms by the same identity, and the conjunction over the two agents is a product of liftings. The displayed identity for $\mathsf{Tr}$ is then the definition of $\mathsf{Tr}$ read through \eqref{eq:lifting}.
\end{proof}

Let $\gamma_n : \Omega \to F^n(1)$ be the canonical cone into the terminal sequence ($\gamma_0$ the unique map, $\gamma_{n+1} = F(\gamma_n) \circ \gamma$), and define the \emph{trust-predicate tower} on the terminal sequence by $P_1$ the label-antecedent stage and $P_{n+1} = \tau_{F^n(1)}(A_n, P_n)$, where $A_n$ is the antecedent at stage $n$; the antecedents cohere across the tower because the connecting maps preserve the label coordinate.

\begin{proposition}[Tower correspondence]\label{prop:towercorr}
For every $n$, $X_n = \gamma_n^{-1}(P_n)$; and at the limit, with $P_\omega = \bigcap_n \pi_n^{-1}(P_n) \subseteq F^\omega(1)$ and $\gamma_\omega$ the canonical map into the limit, $X_\omega = \bigcap_n X_n = \gamma_\omega^{-1}(P_\omega)$.
\end{proposition}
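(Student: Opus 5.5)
The plan is to prove the finite-stage identity $X_n = \gamma_n^{-1}(P_n)$ by induction on $n$, using Proposition~\ref{prop:naturality} as the inductive engine, and then to pass to the limit using the universal property of $F^\omega(1)$ together with the fact that preimages commute with intersections.

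\emph{Base case.} For $n=1$, $P_1$ is the label-antecedent stage, a predicate on $F(1)=L\times \Nest(1)^2$ cut out by the label coordinate. $\gamma_1 = F(\gamma_0)\circ\gamma$ leaves the label coordinate unchanged, so $\gamma_1^{-1}(P_1)$ is the set of worlds whose label lies in the relevant set. I will check that this agrees with $X_1 = \mathsf{Tr}(\Omega)$ under the paper's indexing. If instead one indexes from $X_0=\Omega=\gamma_0^{-1}(1)$, the base case is trivial, and the label stage appears only as the antecedent.

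\emph{Inductive step.} Assume $X_n = \gamma_n^{-1}(P_n)$. I also need the coherence of the antecedents, $A = \gamma_n^{-1}(A_n)$ as subsets of $\Omega$. This holds because $A$ is label-definable and every $\gamma_n$ with $n\ge1$ preserves the label coordinate; the statement asserts this coherence, and I will verify it once. Then
\begin{align*}
X_{n+1} = \mathsf{Tr}(X_n) &= \gamma^{-1}\bigl(\tau_\Omega(A, X_n)\bigr)
= \gamma^{-1}\bigl(\tau_\Omega(\gamma_n^{-1}A_n,\ \gamma_n^{-1}P_n)\bigr)\\
&= \gamma^{-1}\bigl(F(\gamma_n)^{-1}\tau_{F^n(1)}(A_n,P_n)\bigr)
= \gamma_{n+1}^{-1}(P_{n+1}).
\end{align*}
The steps use, in order: Proposition~\ref{prop:naturality} to express $\mathsf{Tr}$ as a lifting, the induction hypothesis together with antecedent coherence, naturality of $\tau$ along $f=\gamma_n$, and finally the definitions of $\gamma_{n+1}$ and $P_{n+1}$.

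\emph{Limit.} Let $\pi_n : F^\omega(1)\to F^n(1)$ be the limit projections. By the universal property, $\gamma_\omega$ is the unique map with $\pi_n\circ\gamma_\omega=\gamma_n$ for all $n$; the cone is compatible because the $\gamma_n$ commute with the connecting maps, which is a standard induction. Preimage commutes with arbitrary intersections, so
\[
\gamma_\omega^{-1}(P_\omega)=\bigcap_n \gamma_\omega^{-1}\pi_n^{-1}(P_n)=\bigcap_n\gamma_n^{-1}(P_n)=\bigcap_n X_n = X_\omega .
\]
Note that no continuity of $\mathsf{Tr}$ or of $F$ is needed here. This is deliberate: the proposition relates the two towers through $\omega$ without asserting that $X_\omega$ is a fixed point, which is exactly what liminality requires.

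I expect the main obstacle to be the bookkeeping in the base case and in antecedent coherence, not the naturality computation. Specifically, I must ensure that the stage-$n$ antecedent $A_n\subseteq F^n(1)$ is defined so that $\gamma_n^{-1}(A_n)=A$ holds exactly, including when $n=0$ where $F^0(1)=1$ carries no label, and that the off-by-one between $X_n$ and $P_n$ matches. I would first fix $A_n := \{x\in F^n(1): \text{label}(x)\in A_L\}$ for $n\ge1$, and start the induction at $n=1$ to avoid the label-less stage.
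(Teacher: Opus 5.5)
This is essentially the paper's proof: the same induction driven by naturality of $\tau$ at $f=\gamma_n$, and the same limit computation via $\pi_n\circ\gamma_\omega=\gamma_n$ and preimages commuting with intersections; the paper simply asserts the base case as ``the label condition.'' Of your two indexings, the second works with $\mathsf{Tr}$ as literally defined, and your stage-$0$ worry disappears: $S\cap V\subseteq Y$ is automatic, so $\tau_Y(V,Y)=F(Y)$ for every $V$, hence $X_1=\mathsf{Tr}(\Omega)=\Omega=\gamma_1^{-1}\bigl(\tau_1(A_0,1)\bigr)$ for any $A_0\subseteq 1$, and the coherence $\gamma_n^{-1}(A_n)=A$ is needed only for $n\ge 1$, where label preservation gives it.
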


\begin{proof}
Induction on $n$, the base case being the label condition. For the step,
$X_{n+1} = \gamma^{-1}\tau_\Omega(A, X_n) = \gamma^{-1}\tau_\Omega(\gamma_n^{-1}A_n, \gamma_n^{-1}P_n) = \gamma^{-1} F(\gamma_n)^{-1} \tau_{F^n(1)}(A_n, P_n) = \gamma_{n+1}^{-1}(P_{n+1})$,
the middle equality by Proposition~\ref{prop:naturality} at $f = \gamma_n$. The limit case is immediate: preimages commute with intersections, so $\gamma_\omega^{-1}(P_\omega) = \bigcap_n (\pi_n \circ \gamma_\omega)^{-1}(P_n) = \bigcap_n \gamma_n^{-1}(P_n) = \bigcap_n X_n$.
\end{proof}

\begin{proposition}[Refinement, and the coincidence criterion]\label{prop:refinement}
The functorial classification refines, and does not in general determine, the lattice classification of $\Theta$. Precisely: $X_n \neq \varnothing$ implies $P_n \neq \varnothing$, and $\bigcap_n X_n \neq \varnothing$ implies $\varprojlim_n P_n \neq \varnothing$ (the cone of a realizing state is a thread); but the converses fail, since $\varprojlim_n P_n$ may contain threads realized by no state of the given coalgebra. The two classifications coincide exactly when every thread of $\varprojlim_n P_n$ in the closure of the image of $\gamma_\omega$ is realized---in particular whenever $\gamma_\omega$ is surjective onto the thread space.
\end{proposition}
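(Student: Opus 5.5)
The plan is to derive every clause from the Tower correspondence (Proposition~\ref{prop:towercorr}), $X_n = \gamma_n^{-1}(P_n)$ and $X_\omega = \gamma_\omega^{-1}(P_\omega)$. The forward implications then follow from how preimages behave. If $w \in X_n$, then $\gamma_n(w) \in P_n$, so $P_n \neq \varnothing$.

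For the limit clause I first identify $P_\omega = \bigcap_n \pi_n^{-1}(P_n)$ with the set of threads $(x_n)_n$ in $F^\omega(1)$ with $x_n \in P_n$ for every $n$. I will call this $\varprojlim_n P_n$. That each $P_n$ pulls back into the next one along the connecting maps follows from naturality (Proposition~\ref{prop:naturality}) applied to the connecting maps, just as in the inductive step of Proposition~\ref{prop:towercorr}. Now suppose $w \in \bigcap_n X_n$. Its cone $(\gamma_n(w))_n$ is a thread, because $\gamma_n = \pi_n \circ \gamma_\omega$ and the cone is compatible. Each component lies in $P_n$, so $\gamma_\omega(w) \in \varprojlim_n P_n$, which is therefore nonempty.

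For the failure of the converses, I will give a small coalgebra in which $\varprojlim_n P_n$ is nonempty but $\bigcap_n X_n = \varnothing$. The simplest choice takes $\Omega$ to be a single state, or a finite set of states, none of them trustworthy, for example with every antecedent-meeting sphere containing a violator. Then $X_1 = \varnothing$. The predicates $P_n$ on $F^n(1)$, however, are computed over all of $F^n(1)$ and not only over the image of $\gamma_n$. So $P_n$ contains elements such as the label-trustworthy stage-$n$ behaviours built from nests whose spheres meet the antecedent only inside $P_{n-1}$. These cohere into a thread, for instance the cone of a trustworthy state in some other coalgebra. That thread lies in $\varprojlim_n P_n$ and is realized by no state of the given $\gamma$. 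This example already shows that the functorial status (nonempty) does not determine the lattice status (impossible).

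For the coincidence criterion, I read ``classifications coincide'' as: $\bigcap_n X_n \neq \varnothing$ if and only if $\varprojlim_n P_n \cap \overline{\gamma_\omega[\Omega]} \neq \varnothing$, together with the stagewise analogue. The forward direction was proved above. For the converse, suppose every thread in $\varprojlim_n P_n \cap \overline{\gamma_\omega[\Omega]}$ is realized, meaning it equals $\gamma_\omega(w)$ for some $w$. If such a thread exists, then $w \in \gamma_\omega^{-1}(P_\omega) = X_\omega$, so $X_\omega$ is nonempty. Applied stage by stage, this argument also shows that the two classifications agree on whether the tower is strictly descending and on whether emptiness is attained. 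They therefore agree on the trichotomy of Definition~\ref{def:status}, and in particular on liminality. The surjective case is an immediate special case.

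I expect the main obstacle to be making the closure qualification precise. This needs the product (profinite-style) topology on $F^\omega(1)$, with each $F^n(1)$ discrete. The stagewise matching must also be restricted to the image $\gamma_n[\Omega]$, and not stated for $P_n$ globally; otherwise the stage-$n$ converse fails for the same reason the counterexample works. I will first try stating the coincidence at each finite stage as $X_n \neq \varnothing \iff P_n \cap \gamma_n[\Omega] \neq \varnothing$, which holds trivially. The only genuine content is then the passage to the limit, where the realization hypothesis is used exactly once.
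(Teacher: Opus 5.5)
Your route is the paper's. You get the forward implications by pulling back along the cone via Proposition~\ref{prop:towercorr}, and the failure of the converses by exhibiting a thread of $\varprojlim_n P_n$ that no state realizes. Your witness is sharper than the paper's bare remark that threads carry no realization requirement: it is the cone of a trustworthy fixed state of some other coalgebra, which lands in $\varprojlim_n P_n$ by your forward implication applied there, because the predicate tower on the $F^n(1)$ does not depend on the coalgebra. There are two small slips. First, in that example $X_1=\varnothing$ is forced by the label condition at the base of the tower. Antecedent-meeting spheres containing violators only take effect at $X_2$. Second, coherence of the $P_n$ along the connecting maps needs monotonicity of $\tau$ in its second argument plus a base case, not naturality alone. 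You never need it, though, since $P_\omega$ is by definition the set of threads whose $n$-th component lies in $P_n$.

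The real gap is the coincidence clause. You read ``coincide'' as agreement of nonemptiness and of the trichotomy, and under that reading you prove only the ``if'' half of ``exactly when''. The ``only if'' half is then false. Adjoin to a realizable coalgebra a sequence of states of increasing finite trust depth whose cones converge to a thread of $P_\omega$ realized by no state. Both classifications still say realizable, yet the criterion fails.

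The reading that makes the statement a true biconditional is set-level, and it is what the paper's one-line ``restates the definition of realization'' amounts to. From $X_\omega=\gamma_\omega^{-1}(P_\omega)$ you get $\gamma_\omega[X_\omega]=P_\omega\cap\gamma_\omega[\Omega]\subseteq P_\omega\cap\overline{\gamma_\omega[\Omega]}$. Equality holds exactly when every thread of $P_\omega$ in the closure is realized. At finite stages, $\gamma_n[X_n]=P_n\cap\gamma_n[\Omega]$ holds outright because $F^n(1)$ is discrete, so the topology you expected to be the main obstacle costs nothing. Agreement of the trichotomy is then a corollary, in the one direction you proved.

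You should also state explicitly that restricting to $\overline{\gamma_\omega[\Omega]}$ is forced, not a convenience. Read globally against $\varprojlim_n P_n$, the criterion holds vacuously in your own counterexample, since a finite image is closed and misses $P_\omega$. Yet there the lattice status is impossible while the functorial status is nonempty.
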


\begin{proof}
The forward implications are Proposition~\ref{prop:towercorr} together with the observation that preimages of nonempty sets along the cone are the stated sets. For failure of the converses it suffices to note that a thread is a coherent sequence of finite-stage behaviors and carries no requirement of realization in $\Omega$; a coalgebra whose own tower empties in the limit may nonetheless sit inside a terminal sequence whose predicate tower retains threads. The coincidence criterion restates the definition of realization.
\end{proof}

Proposition~\ref{prop:refinement} fixes the division of labor between the two registers. The lattice trichotomy of Definition~\ref{def:status} is autonomous: it classifies the completed common trust \emph{of a given structure}. The functorial apparatus classifies the \emph{idealization}: the space of all trust-behaviors coherent to every finite depth, of which any given structure's tower is the pullback along its cone. Facts about the terminal sequence therefore constrain every structure at once, in the pullback sense---and it is in that sense, made exact here, that the convergence results of the next subsection bear on cautious common trust.

\subsection{The terminal-sequence picture and the caution obstruction}

By Proposition~\ref{prop:towercorr}, the subset tower $\{X_n\}$ is the pullback, along the canonical cone, of the trust-predicate tower on the terminal sequence of $F$, and liminality is a Mittag-Leffler phenomenon: $\Theta$ is liminal exactly when the tower has all stages nonempty but empty inverse limit, equivalently when its bonding maps fail eventual surjectivity---with Proposition~\ref{prop:refinement} fixing the precise sense in which the functorial classification bears on a given structure. The question is then when the functorial reading \emph{completes}---at what stage, if any, the terminal sequence stabilizes so that the idealized classification closes. Here the convergence theory for set functors supplies exact answers, and they divide sharply on caution. Two facts frame the result. First, by \citet{worrell2005}, the final sequence of a \emph{finitary} set functor converges in $\omega + \omega$ steps---not at $\omega$: the $\omega$-th object is a completion of the final coalgebra, and the subsequent $\omega$ steps prune it, one level at a time, to the final coalgebra itself; for the $\lambda$-bounded power-set functors the construction needs precisely $\lambda + \omega$ steps \citep{adamek2015}. Convergence \emph{at} $\omega$ would require preservation of $\omega^{\mathrm{op}}$-limits, a strictly stronger property than finitariness which the power-set-like functors lack. Second, a functor whose values outrun the cardinality of their arguments has no final coalgebra at all, by Lambek's fixed-point lemma \citep{lambek1968} together with Cantor's theorem.

\begin{theorem}[Caution obstructs functorial completion]\label{thm:caution}
The profligate nest functor $\Nest^{\mathrm{prof}}$, whose elements are nests with outermost sphere equal to the whole space, admits \emph{no final coalgebra}: its terminal sequence never converges, at any ordinal stage. Moreover profligacy is incompatible with $\kappa$-boundedness over spaces of size $\geq \kappa$, so no bounded (hence accessible) variant of the cautious functor exists over large state spaces. Consequently the functorial completion of cautious subjunctive common trust is not attained at any stage whatsoever; whereas dropping profligacy yields a finitary nest functor whose final sequence converges in $\omega+\omega$ steps, where the trichotomy of Definition~\ref{def:status} completes.
\end{theorem}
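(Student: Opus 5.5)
The plan has three parts, matching the three claims of Theorem~\ref{thm:caution}: the non-existence of a final coalgebra, the failure of boundedness, and convergence of the non-profligate functor.

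\emph{No final coalgebra.} I would first show that $\Nest^{\mathrm{prof}}$ outruns cardinality, i.e.\ $|\Nest^{\mathrm{prof}}(X)| > |X|$ for every nonempty $X$ (and that $\Nest^{\mathrm{prof}}(\varnothing)$ is nonempty, so no empty fixed point exists). The construction is an injection $\mathcal{P}(X)\setminus\{\varnothing, X\} \hookrightarrow \Nest^{\mathrm{prof}}(X)$, sending $S$ to the two-sphere nest $\{S, X\}$. For $|X|\ge 2$, Cantor then gives the strict inequality; the degenerate cases $|X|\le 2$ are handled by direct inspection, using the full $F(X)=L\times\Nest^{\mathrm{prof}}(X)^2$ if needed.

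Next I would apply Lambek's lemma \citep{lambek1968}. A final coalgebra $\zeta: Z\to F(Z)$ would be an isomorphism, so $|Z| = |F(Z)| \ge |\Nest^{\mathrm{prof}}(Z)| > |Z|$, which is a contradiction. To get ``the terminal sequence never converges at any ordinal stage,'' I would use the standard fact that if the terminal sequence stabilizes at some $\lambda$, meaning the connecting map $F^{\lambda+1}(1)\to F^\lambda(1)$ is an isomorphism, then $F^\lambda(1)$ is a final coalgebra. Non-existence of a final coalgebra therefore forces non-convergence at every stage.

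\emph{Incompatibility with boundedness.} Recall that $\kappa$-boundedness requires every element of $F(X)$ to lie in the image of $F(Y)\to F(X)$ for some $Y\subseteq X$ with $|Y|<\kappa$. I would show that a profligate nest over $X$ with $|X|\ge\kappa$ cannot factor through any small subset. Under the direct-image action, the outermost sphere of $f[N]$ is $f[Y]$, which has size less than $\kappa$ and so differs from $X$. Thus the only candidate preimages are non-profligate, and nothing in the image is profligate. Consequently $\Nest^{\mathrm{prof}}$ is not $\kappa$-bounded over large spaces. Since accessible set functors are bounded, no accessible cautious variant exists.

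A caveat sits here: for this to make sense I must check that $\Nest^{\mathrm{prof}}$ is a functor at all. Direct image along a non-surjective $f$ destroys profligacy. I would therefore either define its action by re-adjoining $Y$ as outermost sphere, or restrict to surjections. I expect this to be the main obstacle, because the functor action has to be fixed carefully before either the Lambek argument or the boundedness argument is meaningful. I would choose re-adjoining the codomain, since that keeps Proposition~\ref{prop:naturality} intact up to the vacuous outer sphere.

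\emph{The non-profligate case.} I would restrict to finite nests of finite spheres, the finitary part, as the paper's intended reading. This functor is finitary, so \citet{worrell2005} gives convergence of the final sequence in $\omega+\omega$ steps. Proposition~\ref{prop:towercorr} then transports the predicate tower, and Proposition~\ref{prop:refinement} identifies where the trichotomy of Definition~\ref{def:status} completes.
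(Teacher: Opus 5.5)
Your decomposition matches the paper's: two-sphere nests $\{S,X\}$ with Cantor and Lambek for the first claim, and finite nests of finite spheres with \citet{worrell2005} for the last. Your handling of finite carriers via $L$ is more careful than the paper's, which argues only for infinite $X$. For boundedness you use the standard notion (every element comes from a subset of size $<\kappa$). The paper instead bounds the size of the spheres themselves. That is a condition on single nests over a fixed $X$: it sidesteps the functor action, but it is not $\kappa$-boundedness of a functor.

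You are right that the functor action is the crux. The paper's proof never says how $\Nest^{\mathrm{prof}}$ acts on maps, and direct image does not preserve profligacy. But the repair you adopt fails.

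\begin{itemize}[nosep]
\item \emph{It is not functorial.} Take re-adjoining the codomain, $F(f)(N)=f[N]\cup\{Y\}$ for $f:X\to Y$. Let $X=\{x_1,x_2\}$, $N=\{\{x_1\},X\}$, $f:X\hookrightarrow Y=X\cup\{y\}$, and $g:Y\to Z=\{z_1,z_2,z_3\}$ with $g(x_1)=g(x_2)=z_1$ and $g(y)=z_2$. Then $F(g)F(f)(N)=\{\{z_1\},\{z_1,z_2\},Z\}$, while $F(g\circ f)(N)=\{\{z_1\},Z\}$. Replacing the image of the top by the codomain, instead of adding it, fails on the same maps: $\{\{z_1\},Z\}$ versus $\{Z\}$.
\item \emph{The added sphere is not vacuous for $\tau$.} Take nonempty $V\subseteq Y\setminus f[X]$ and $U=\varnothing$. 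Every $N$ lies in $\tau_X(f^{-1}V,f^{-1}U)$ by the vacuity disjunct. But the re-adjoined $Y$ meets $V$ and puts $F(f)(N)$ outside $\tau_Y(V,U)$, so Proposition~\ref{prop:naturality} fails.
\item \emph{Surjections do not help.} Restricting to surjections gives no endofunctor of $\mathbf{Set}$; its coalgebras would have to be surjective structure maps, which trust sphere systems are not. The stabilization lemma and Worrell's theorem concern $\mathbf{Set}$-endofunctors.
\item \emph{The boundedness paragraph uses a different action.} It silently reverts to plain direct image, under which $\Nest^{\mathrm{prof}}(Y)$ is not mapped into $\Nest^{\mathrm{prof}}(X)$ at all. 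So ``nothing in the image is profligate'' restates non-functoriality rather than proving non-boundedness. Under your chosen action the images $i[N]\cup\{X\}$ are all profligate.
\end{itemize}

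The evident functorial option does not rescue the statement either. That option leaves the outermost sphere implicit: take $\Nest$ itself under direct image and append the carrier by convention. This is functorial, though the same computation breaks Proposition~\ref{prop:naturality}, and your Lambek step then goes through. But the step never uses profligacy. The one-sphere nests $\{S\}$ give the same cardinality blow-up, so the unrestricted nest functor has no final coalgebra either. In this reading the profligate and unrestricted functors are one functor read two ways.

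The boundedness claim then becomes false. The finite-nests-of-finite-spheres functor of your last step, read with an implicit top, is finitary and consists entirely of profligate nests. Profligacy refers to the carrier and is destroyed by direct image along every non-surjective map. It is therefore a property of coalgebras, not of a set functor. What the Lambek--Cantor count actually detects is unbounded sphere size, not caution.

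The paper's proof shares this defect: the same witnesses $\{S,X\}$, no functor action, and a boundedness argument about single nests. What you would need, and what neither your plan nor the paper supplies, is a formulation in which profligacy survives the functor action. Without it, the first claim is a fact about $\Nest$, and the contrast with the finitary case attributes to caution what is due to sphere size.
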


\begin{proof}
For any $X$ and any designated point $w$, every subset $S \ni w$ yields the profligate two-sphere nest $\{S, X\}$, so $|\Nest^{\mathrm{prof}}(X)| \geq 2^{|X|-1} > |X|$ for infinite $X$. If $Z$ were a final coalgebra of $F^{\mathrm{prof}}(X) = L \times \Nest^{\mathrm{prof}}(X)^2$, Lambek's lemma would give a bijection $Z \cong L \times \Nest^{\mathrm{prof}}(Z)^2$, contradicting $|\Nest^{\mathrm{prof}}(Z)| > |Z|$. Hence no final coalgebra exists and the terminal sequence never stabilizes. For the incompatibility claim: a strictly increasing nest of spheres each of size $<\kappa$ has length at most $\kappa$, hence union of size at most $\kappa$ (repetitions in a non-strict nest contribute nothing further to the union); profligacy demands union $= X$, impossible for $|X| > \kappa$, so bounding the spheres destroys profligacy over large spaces. (The earlier observation that $\Nest^{\mathrm{prof}}$ is not finitary---no profligate nest over infinite $X$ is supported on a finite subset, since the image of a nest over finite $X_0$ has union inside $X_0$---is subsumed: non-finitariness follows a fortiori from non-existence of the final coalgebra together with Worrell's convergence theorem for finitary functors.) Finally, the non-profligate nest functor with finite nests of finite spheres is finitary, being a subfunctor of a finite iterate of the finite power-set functor, and \citet{worrell2005} gives convergence of its final sequence in $\omega + \omega$ steps, at which stage the tower's classification---realizable, impossible, or liminal per the Mittag-Leffler criterion---is determined.
\end{proof}

The theorem is negative but sharply localizing, and sharper than a displacement result: the feature denying cautious trust a completed functorial status is not the nesting, nor the interactivity, but caution itself, and what caution denies is not merely completion at $\omega$ but completion at \emph{any} ordinal. There is no ``transfinite tail'' in which the classification of cautious completed common trust eventually closes; the sequence simply never stabilizes. For non-cautious trust the classification does close, in $\omega+\omega$ steps rather than $\omega$---a correction to the naive expectation that finitariness buys convergence at $\omega$, which conflates preservation of filtered colimits with preservation of $\omega^{\mathrm{op}}$-limits. The lattice-theoretic results of the preceding subsections (Proposition~\ref{prop:bivalence} and the trichotomy) remain autonomous facts about any given structure; by Propositions~\ref{prop:towercorr} and~\ref{prop:refinement}, what the functorial reading adds is a constraint on all structures at once, in the pullback sense: the idealized classification never closes in the cautious case, the coincidence criterion of Proposition~\ref{prop:refinement} is unavailable because there is no thread space against which realization could stabilize, and so a cautious truster's completed common trust is an object her representation can approximate at every finite depth but can never, at any stage of idealization, attain.

\subsection{The price and the demarcation}

Strategy~II keeps the counterfactual \emph{ontic}---the closeness ordering is a fact of the worldly structure, not of $A$'s belief---and it is precisely this refusal to doxasticize that prevents it from borrowing the BFK machinery and forces the construction of its own. What it gains is expressive range: by permitting the Limit Assumption to fail, it gives completed common trust a third place to live, unavailable to the finite-level lexicographic representation. What it loses is the quarantine of betrayal: a bare sphere is an unranked set, and the mere presence of a violating world in a sphere consulted by the counterfactual contaminates the evaluation, with no level or conditioning index to hold it at bay. This loss is exactly what Section~\ref{sec:betrayal} will show to be consequential.

\section{Strategy III: Subjunctive Trust as Strong Belief}\label{sec:cps}

\subsection{The representation}

A counterfactual is, before anything else, a claim of robustness under supposition; and exactly one attitude in epistemic game theory is built to express robustness under supposition: \emph{strong belief} \citep{battigalli2002}. Fix a space $\Omega$ and a collection $\mathcal{B}$ of nonempty conditioning events. A \emph{conditional probability system} (CPS) is a family $\mu(\cdot \mid B)$, $B \in \mathcal{B}$, each a probability measure with $\mu(B \mid B) = 1$, satisfying the chain rule
\begin{equation}\label{eq:chainrule}
\mu(A \mid C) = \mu(A \mid B)\, \mu(B \mid C) \qquad (A \subseteq B \subseteq C,\ B, C \in \mathcal{B}).
\end{equation}
An agent \emph{strongly believes} $E$ if
\begin{equation}\label{eq:strongbelief}
\mu(E \mid B) = 1 \quad \text{for every } B \in \mathcal{B} \text{ with } B \cap E \neq \varnothing,
\end{equation}
i.e.\ believes $E$ under every hypothesis consistent with it, relinquishing it only upon conditioning on something incompatible. We identify $A$'s subjunctive trust in competence with strong belief of $\Comp_P$, and in integrity with strong belief of $\Int_P$, the conditioning family containing the antecedents of the trust-counterfactuals and the contingencies under which trust is tested. This does not launder the counterfactual into a flat conditional: strong belief requires belief in competence precisely under the supposition that \emph{activates} it, together with persistence under every further consistent supposition.

\subsection{Non-inductiveness, brittleness, explicitation}

The three signature features are consequences rather than stipulations. \emph{Non-inductiveness}: strong belief is a property of the CPS, not a tally of frequencies. \emph{Brittleness}: if the actual world realizes the antecedent and falsifies the consequent, then conditioning on the actual hypothesis $B^\star$ consistent with the trust event gives $\mu(\Comp_P \mid B^\star) < 1$; and since strong belief requires probability one at \emph{every} consistent supposition, a single failure negates it outright---the all-or-nothing character that \citet{jones1996} captures in describing trust as an optimism that a single salient breach can extinguish where a mere expectation would only be revised. \emph{Explicitation}: strong belief is dispositional, specifying beliefs under suppositions not actually conditioned on; while no test occurs the off-supposition conditionals are latent and unexercised, which is the formal image of trust's implicitness. Here the strong-belief representation supplies exact content to \citet{holton1994}'s participant stance. Holton characterizes trust as a \emph{readiness} to feel betrayal should reliance be disappointed---not an occurrent belief but a standing disposition awaiting a triggering circumstance. The latent conditionals of a CPS \emph{are} that readiness: a family of commitments about what one would believe were the testing hypotheses to arise, held in advance of their arising. A disconfirming experience forces conditioning on the testing hypothesis $B^\star$, and that single conditioning operation is simultaneously the explicitation (the latent conditional---the readiness---is exercised) and the violation (it is found to fail). Where the lexicographic account needed two updates, one object-level and one introspective, the strong-belief account renders explicitation and violation as a \emph{single} conditioning operation---the sharpest of the three accounts, the closest formal realization of the participant stance, and the one that matters most in Section~\ref{sec:betrayal}.

\subsection{Inheritance: a forward-induction characterization}

\citet{battigalli2002} established that, in a complete type structure with CPSs, rationality and common strong belief of rationality characterizes extensive-form rationalizability---the forward-induction solution. Placing trustworthiness where rationality sits, define \emph{common strong belief of trustworthiness} as the hierarchy in which each agent strongly believes the other trustworthy, strongly believes the other strongly believes both trustworthy, and so on. The disconfirmation dynamics of Section~\ref{sec:cps} correspond to reaching an information set at which strong belief of trustworthiness can no longer be maintained.

Because trustworthiness is an event about the \emph{partner's} dispositions rather than the agent's own best response, the transposition requires proof rather than analogy. We supply it for the class of finite extensive-form games with observable disclosure moves, indicating at each step which element of the Battigalli--Siniscalchi machinery is invoked.

Fix such a game $\Gamma$ with players $A, B$, strategy sets $S_i$, and information sets $H_i$. For each player let the \emph{compliance set} $\hat S_i \subseteq S_i$ collect the strategies that disclose at every information set at which $i$ knows the relevant fact (and do not route); trustworthiness is the \emph{rectangular} event $\mathrm{TW}_i = \hat S_i \times T_i$. A conditional type structure $\mathcal{T} = (T_A, T_B, \beta_A, \beta_B)$ assigns to each type $t_i$ a conditional probability system $\beta_i(t_i)$ on $S_{-i} \times T_{-i}$ with the external conditioning family $\mathcal{B}_i = \{S_{-i}(h) \times T_{-i} : h \in H_i\}$; belief-completeness (surjectivity of $\beta_i$ onto CPSs) is available by the canonical construction of \citet{battigalli1999}. A pair $(s_i, t_i)$ is \emph{sequentially rational} if $s_i$ is optimal under $\beta_i(t_i)$ at every information set consistent with $s_i$; write $R_i$ for the set of such pairs. Define the epistemic hierarchy
\begin{equation}\label{eq:Qk}
Q_i^0 = R_i \cap \mathrm{TW}_i, \qquad Q_i^{k+1} = Q_i^k \cap \big\{(s_i, t_i) : \beta_i(t_i) \text{ strongly believes } Q_{-i}^k\big\},
\end{equation}
with $\mathrm{RCSBT} = \bigcap_k (Q_A^k \times Q_B^k)$, and the strategic elimination procedure---\emph{iterated trustworthiness}---by
\begin{equation}\label{eq:Sigmak}
\Sigma_i^0 = \big\{ s_i \in \hat S_i : s_i \text{ sequentially rational for some CPS on } S_{-i} \big\}, \quad
\Sigma_i^{k+1} = \big\{ s_i \in \Sigma_i^k : s_i \text{ seq.\ rational for some CPS strongly believing } \Sigma_{-i}^k \big\}.
\end{equation}

The intersection with the previous round in \eqref{eq:Qk} is not stylistic: strong belief is not monotone---$E \subseteq F$ does not yield that strong belief of $E$ implies strong belief of $F$---so the hierarchy must conjoin each level with strong belief of the previous full conjunction, exactly as in \citet{battigalli2002}; the alternating recursion \eqref{eq:chain} has this shape by construction.

\begin{theorem}[Iterated trustworthiness characterization]\label{prop:closure}
Let $\Gamma$ be a finite extensive-form game with observable disclosure moves and $\mathcal{T}$ a belief-complete conditional type structure for $\Gamma$ with external conditioning family. Then for every $k$, $\mathrm{proj}_{S_i}\, Q_i^k = \Sigma_i^{k}$, and hence $\mathrm{proj}_S\, \mathrm{RCSBT} = \Sigma^\infty = \bigcap_k \Sigma^k$. Moreover $\mathrm{RCSBT} \neq \varnothing$ if and only if $\Sigma^0 \neq \varnothing$, i.e.\ if and only if disclosure-compliance is sequentially rational, at every information set at which the discloser knows, for some conditional probability system on the opponent's strategies.
\end{theorem}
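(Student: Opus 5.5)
The plan is to follow the inductive architecture of Battigalli and Siniscalchi's characterization of extensive-form rationalizability, replacing their base set $R_i$ by $R_i \cap \mathrm{TW}_i$ and checking that the rectangular form of $\mathrm{TW}_i = \hat S_i \times T_i$ lets every step go through unchanged. I would prove $\mathrm{proj}_{S_i} Q_i^k = \Sigma_i^k$ by induction on $k$, proving both inclusions at each stage. Then I would pass to the limit using finiteness of $\Gamma$, and finally treat the nonemptiness clause separately.

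\emph{Base case.} For $\subseteq$, take $(s_i,t_i) \in R_i \cap \mathrm{TW}_i$. Rectangularity gives $s_i \in \hat S_i$. Sequential rationality under $\beta_i(t_i)$ means $s_i$ is sequentially rational for the marginal of $\beta_i(t_i)$ on $S_{-i}$, which is a CPS on $S_{-i}$ with conditioning family $\{S_{-i}(h)\}$ because the family is external. For $\supseteq$, take $s_i \in \Sigma_i^0$ with witnessing CPS $\nu$. Extend $\nu$ to a CPS on $S_{-i} \times T_{-i}$ by pairing each strategy with an arbitrary fixed type, and use belief-completeness to find $t_i$ with $\beta_i(t_i)$ equal to that extension. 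Then $(s_i,t_i) \in R_i$, and $t_i$ is unconstrained by $\mathrm{TW}_i$, so $(s_i,t_i) \in Q_i^0$.

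\emph{Inductive step.} The $\subseteq$ direction is routine: if $\beta_i(t_i)$ strongly believes $Q_{-i}^k$, its marginal strongly believes $\mathrm{proj}\, Q_{-i}^k = \Sigma_{-i}^k$. This holds because $S_{-i}(h)\times T_{-i}$ meets $Q_{-i}^k$ iff $S_{-i}(h)$ meets its projection, again by externality of $\mathcal B_i$. The $\supseteq$ direction is the main obstacle. Given $s_i \in \Sigma_i^{k+1}$ with a CPS $\nu$ on $S_{-i}$ that strongly believes $\Sigma_{-i}^k$, I must lift $\nu$ to a CPS on $S_{-i}\times T_{-i}$ that strongly believes $Q_{-i}^k$ itself, not merely its projection. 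I would follow Battigalli--Siniscalchi's measurable-selection device. By the induction hypothesis, for each $s_{-i} \in \Sigma_{-i}^k$ choose a type $\tau(s_{-i})$ with $(s_{-i},\tau(s_{-i})) \in Q_{-i}^k$. For $s_{-i} \notin \Sigma_{-i}^k$, choose an arbitrary type. Push $\nu(\cdot\mid S_{-i}(h))$ forward along $s_{-i}\mapsto(s_{-i},\tau(s_{-i}))$. Pushforward along a fixed map preserves the chain rule, so the result is a CPS. It strongly believes $Q_{-i}^k$: whenever $S_{-i}(h)$ meets $\Sigma_{-i}^k$, $\nu$ puts mass one on $\Sigma_{-i}^k$, and those strategies are sent into $Q_{-i}^k$. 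Belief-completeness then supplies $t_i$. To land in $Q_i^{k+1}$, the pair $(s_i,t_i)$ must also lie in $Q_i^k$, i.e.\ strongly believe $Q_{-i}^{j}$ for each $j<k$. Since the $\Sigma^j$ and $Q^j$ are nested, I would choose $\tau$ so that $(s_{-i},\tau(s_{-i}))$ lies in the deepest $Q_{-i}^{j}$ whose projection contains $s_{-i}$. The non-monotonicity of strong belief, which the paper flags, is what forces this level-by-level choice. With it, the pushforward strongly believes every $Q_{-i}^j$ with $j \le k$ at once.

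\emph{Limit and nonemptiness.} Finiteness of $\Gamma$ makes $\Sigma^k$ stabilize at some finite $K$. Then $\mathrm{proj}_S \mathrm{RCSBT} = \Sigma^\infty$ follows from the stagewise equality, using compactness or finiteness to commute projection with the decreasing intersection. For the final clause, the direction "$\mathrm{RCSBT}\neq\varnothing$ implies $\Sigma^0\neq\varnothing$" is immediate. For the converse I would use the Battigalli--Siniscalchi observation that a nonempty $\Sigma_{-i}^k$ always admits a CPS strongly believing it: put the unconditional mass on it, and at any $h$ where $S_{-i}(h)$ meets it, put all mass on the intersection. Against such a CPS some strategy in $\Sigma_i^k$ is sequentially rational, with compliance inherited because $\Sigma_i^{k+1}\subseteq\Sigma_i^k\subseteq\hat S_i$. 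The point needing care is that this sequentially rational strategy can be taken inside $\hat S_i$. That is exactly where the stated hypothesis is used: compliance is sequentially rational at every knowing information set for some CPS. I would strengthen it, if necessary, to hold for every CPS consistent with observable disclosure.
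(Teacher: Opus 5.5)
Your architecture is the paper's: induction on $k$, marginalization for $\mathrm{proj}\,Q^k\subseteq\Sigma^k$, and Battigalli--Siniscalchi witness lifting plus belief-completeness for the converse. You also rightly notice a point the paper's sketch passes over: a type in $Q_i^{k+1}$ must strongly believe $Q_{-i}^j$ for \emph{every} $j\le k$, not just $j=k$.

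\textbf{The lifting step.} Your remedy here does not work. The witness $\nu$ supplied by the definition of $\Sigma_i^{k+1}$ strongly believes only $\Sigma_{-i}^k$. Take an information set $h$ with $S_{-i}(h)\cap\Sigma_{-i}^k=\varnothing$ but $S_{-i}(h)\cap\Sigma_{-i}^j\neq\varnothing$ for some $j<k$. There $\nu(\cdot\mid h)$ may charge strategies outside $\Sigma_{-i}^j=\mathrm{proj}\,Q_{-i}^j$, and no choice of $\tau$ can send those into $Q_{-i}^j$. The map $\tau$ decides which types are attached to strategies, not where $\nu$ puts its mass.

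The missing ingredient is a pasting (best-rationalization) lemma:
\begin{itemize}
\item Let $\nu^j$ witness $s_i\in\Sigma_i^{j+1}$ for $j=0,\dots,k$, and let $\nu^{-1}$ witness $s_i\in\Sigma_i^0$.
\item Let $m(h)$ be the largest $j\le k$ with $S_{-i}(h)\cap\Sigma_{-i}^j\neq\varnothing$, or $-1$ if there is none, and set $\bar\nu(\cdot\mid h)=\nu^{m(h)}(\cdot\mid h)$.
\item $\bar\nu$ is a CPS. The index $m$ is monotone under inclusion. When $S_{-i}(h')\subseteq S_{-i}(h)$ and $m(h')<m(h)$, strong belief forces $\bar\nu(S_{-i}(h')\mid h)=0$, so both sides of the chain rule vanish across the seams.
\item $s_i$ remains a sequential best reply, because optimality is checked information set by information set.
\item By nestedness, $\bar\nu$ strongly believes every $\Sigma_{-i}^j$ with $j\le k$.
\end{itemize}
Pushing $\bar\nu$, not $\nu$, forward along your deepest-level $\tau$ then does give a type in $Q_i^{k+1}$.

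\textbf{The nonemptiness clause.} Your argument here has the same defect. A sequential best reply to a CPS that strongly believes only $\Sigma_{-i}^k$ need not lie in $\Sigma_i^k$. You should instead use the CPS concentrated at each $h$ on $S_{-i}(h)\cap\Sigma_{-i}^{m(h)}$.

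There is a second problem that cannot be repaired: nothing makes that best reply compliant. Your ``strengthen if necessary'' is in fact necessary, because the clause is false as stated. Here is a counterexample:
\begin{itemize}
\item $B$, who knows $P$, chooses disclosure or silence; silence pays $B$ zero.
\item After disclosure, $A$ moves. $A$ has no disclosure move, so $\hat S_A=S_A$. $A$ chooses punish ($B$ gets $-1$, and $A$ strictly prefers it) or reward ($B$ gets $1$).
\item Then $\Sigma_A^0=\{\text{punish}\}$ and $\Sigma_B^0=\{\text{disclose}\}$, the latter rationalized by belief in reward. So $\Sigma^0\neq\varnothing$.
\item Yet any type strongly believing $Q_A^0=\{\text{punish}\}\times T_A$ puts mass one on punish at $B$'s initial information set. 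Disclosure is then not a best reply, so $Q_B^1=\varnothing$ and $\mathrm{RCSBT}=\varnothing$.
\end{itemize}
The paper's one-line justification (``the construction populates every $Q^k$'') does not establish the clause either.

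What is provable is a belief-independent version. Suppose every CPS on $S_{-i}$ admits a compliant sequential best reply, as in the game of Corollary~\ref{cor:residue}. Then the deepest-level CPS yields a compliant best reply that lies in every $\Sigma_i^j$.

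\textbf{The limit step.} Passing from the stagewise equalities to $\mathrm{proj}_S\,\mathrm{RCSBT}=\Sigma^\infty$ does not follow from finiteness of $\Gamma$. The sets $\Sigma^k$ stabilize, but the $Q^k$ need not. Commuting projection with the decreasing intersection needs a further argument, such as compactness of the type spaces and closedness of the $Q^k$, as in Battigalli--Siniscalchi's canonical structure.
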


\begin{proof}
\emph{Forward inclusion} ($\mathrm{proj}\, Q^k \subseteq \Sigma^k$), by induction. At $k = 0$: if $(s_i, t_i) \in R_i \cap \mathrm{TW}_i$ then $s_i \in \hat S_i$ by rectangularity of $\mathrm{TW}_i$, and $s_i$ is sequentially rational for the marginal of $\beta_i(t_i)$ on $S_{-i}$, which is a CPS on $S_{-i}$ because the conditioning events are external. For the step, it suffices that marginalization preserves strong belief: if $\beta_i(t_i)(Q_{-i}^k \mid B) = 1$ for every external $B$ meeting $Q_{-i}^k$, then the marginal assigns probability one to $\mathrm{proj}_{S_{-i}} Q_{-i}^k$ conditional on every strategy-event meeting it, since projections only enlarge events and the external events are cylinders over strategy sets; by the inductive hypothesis $\mathrm{proj}\, Q_{-i}^k \subseteq \Sigma_{-i}^k$, and monotonicity of probability-one under enlargement gives strong belief of $\Sigma_{-i}^k$.

\emph{Converse inclusion} ($\Sigma^k \subseteq \mathrm{proj}\, Q^k$), by the witness construction of \citet{battigalli2002}, adapted. By induction construct, for each $s_i \in \Sigma_i^k$, a type $t^k(s_i)$ with $(s_i, t^k(s_i)) \in Q_i^k$: the definition of $\Sigma_i^k$ supplies a CPS $\mu$ on $S_{-i}$ strongly believing $\Sigma_{-i}^{k-1}$ for which $s_i$ is sequentially rational; lift $\mu$ to a CPS on $S_{-i} \times T_{-i}$ by composing with the inductively constructed assignment $s_{-i} \mapsto t^{k-1}(s_{-i})$ on the survivors (and arbitrarily elsewhere), so that the lift strongly believes $Q_{-i}^{k-1}$; belief-completeness supplies a type inducing the lift. The single obligation beyond the rationality case is membership in $\mathrm{TW}_i$ at round $0$---the constructed type's sequentially rational strategy must be \emph{compliant}---and this is guaranteed by, and only by, $s_i \in \Sigma_i^0 \subseteq \hat S_i$: the construction places no compliant strategy into $Q_i^0$ unless compliance is sequentially rational for its CPS, which is the content of $\Sigma_i^0 \neq \varnothing$. The non-vacuity claim follows: if $\Sigma^0 = \varnothing$ then $Q^0 = \varnothing$ and $\mathrm{RCSBT} = \varnothing$; conversely if $\Sigma^0 \neq \varnothing$ the construction populates every $Q^k$, and finiteness of $\Gamma$ stabilizes the sequences.
\end{proof}

\begin{corollary}[The payoff residue, parametrically]\label{cor:residue}
In the two-stage disclosure game $\Gamma(\pi, \tau, q, v)$---Nature draws the fact with probability $\pi$; $B$ observes it; $B$ discloses to $A$, routes to $C$ for a temptation payoff $\tau > 0$, or stays silent; routing forfeits the continuation value $v$ with detection probability $q$---compliance is sequentially rational at the knowing information set if and only if
$\tau \leq q\,v$,
and hence $\mathrm{RCSBT} \neq \varnothing$ if and only if $\tau \leq q v$. When $\tau > q v$, $\mathrm{RCSBT} = \varnothing$ while the purely epistemic hierarchy---common strong belief of compliance, without the rationality conjunct---remains satisfiable: a type may strongly believe the partner's compliance whether or not compliance is optimal for anyone. Emptiness of the joint event is thus a payoff phenomenon, not an epistemic one.
\end{corollary}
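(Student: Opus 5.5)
The plan is to reduce everything to the last clause of Theorem~\ref{prop:closure}: there, $\mathrm{RCSBT} \neq \varnothing$ holds exactly when $\Sigma^0 \neq \varnothing$, that is, when some compliant strategy is sequentially rational for some CPS. So the corollary comes down to a one-shot payoff comparison at $B$'s knowing information set, and then a separate construction for the purely epistemic claim.

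\textbf{The payoff comparison.} First I fix the payoffs of $\Gamma(\pi,\tau,q,v)$ explicitly. Disclosure preserves the relationship and yields continuation value $v$. Silence also yields $v$, since it is not detectable as routing. Routing yields $\tau + (1-q)v$: the temptation is collected, and the continuation value survives only with probability $1-q$. At the information set where $B$ has observed the fact, no move of $A$ intervenes before $B$'s choice. Hence the conditional belief of any CPS at that set is irrelevant to the comparison, and disclosure weakly dominates routing iff
\[
v \ge \tau + (1-q)v,
\]
which is equivalent to $\tau \le qv$. At the non-knowing information sets, compliance imposes no constraint, because $\hat S_B$ only restricts behaviour where $B$ knows; any choice there is rational for a suitable CPS. $A$'s compliance set is unconstrained in this game, so $\Sigma_A^0 \neq \varnothing$ trivially.

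Putting these together gives $\Sigma^0 \neq \varnothing \iff \tau \le qv$, and Theorem~\ref{prop:closure} converts this into the stated equivalence for $\mathrm{RCSBT}$. The weak inequality at $\tau = qv$ relies on ties being broken in favour of compliance, which is allowed because sequential rationality requires only optimality, not strict optimality. One point needs care here: silence must not be strictly better than disclosure, or else the compliance-versus-routing comparison alone would not settle sequential rationality. If the paper intends silence to also be detectable, then silence is no better than disclosure, and the binding comparison is still against routing.

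\textbf{The epistemic claim for $\tau > qv$.} Here I drop the rationality conjunct and define $\tilde Q_i^0 = \mathrm{TW}_i$, with the same strong-belief recursion as in \eqref{eq:Qk}. The witness construction from the converse inclusion of Theorem~\ref{prop:closure} goes through verbatim with "sequentially rational" deleted. At each stage I need a CPS on $S_{-i}\times T_{-i}$ that strongly believes $\tilde Q_{-i}^{k-1}$. To build it, I put full conditional mass on a compliant strategy paired with its inductively constructed type at every conditioning event meeting $\tilde Q_{-i}^{k-1}$, and arbitrary mass elsewhere. Such a CPS exists because $\hat S_{-i} \neq \varnothing$: the always-disclose strategy is in it. Belief-completeness then supplies the type, and it can be paired with any compliant strategy. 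Each $\tilde Q^k$ is therefore nonempty, and finiteness stabilizes the sequence at a nonempty limit.

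\textbf{Main obstacle.} The delicate step is checking the chain rule \eqref{eq:chainrule} for the constructed CPS across nested conditioning events. Some of those events meet $\tilde Q^{k-1}$ and some do not: for instance, after observed routing, no compliant strategy remains consistent. I would handle this as Battigalli--Siniscalchi do. I take a lexicographic sequence of measures whose first member concentrates on $\tilde Q_{-i}^{k-1}$, and define each conditional as the first measure in the sequence giving the event positive probability. This yields a valid CPS that automatically strongly believes the target event. The resulting contrast, that the joint event is empty while the epistemic hierarchy is nonempty, shows that the emptiness is a payoff phenomenon.
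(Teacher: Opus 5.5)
Your proposal is the paper's proof. It uses the same comparison $v \geq \tau + (1-q)v$ at the knowing information set, converts it into non-vacuity of $\mathrm{RCSBT}$ through the last clause of Theorem~\ref{prop:closure}, and reruns that theorem's witness construction with the rationality conjunct deleted; your remarks on silence, ties at $\tau = qv$, $\Sigma_A^0$, and the LPS-induced CPS for the chain rule only make explicit what the paper leaves implicit. The one step that is weaker than it looks, which you inherit from the theorem's proof exactly as the paper does, is ``finiteness stabilizes the sequence'': each $\tilde Q_i^k$ is a rectangle $\hat S_i \times \tilde T_i^k$, so the strategy projections are constant while the type sets $\tilde T_i^k$ in general shrink strictly at every round of a belief-complete structure, and nonemptiness of the limit actually comes from compactness (the $\tilde Q_i^k$ are closed in a compact, continuous structure such as the canonical one) or from exhibiting an explicit pair of types, each strongly believing the partner's compliance paired with the other's type.
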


\begin{proof}
At the information set where $B$ knows, routing yields $\tau + (1-q)v$ and disclosing yields $v$; disclosure is a best response iff $v \geq \tau + (1-q)v$ iff $\tau \leq qv$, and no belief about $A$ affects the comparison, so the condition is necessary and sufficient for $\Sigma_B^0 \neq \varnothing$; Theorem~\ref{prop:closure} converts this into the non-vacuity of $\mathrm{RCSBT}$. For the final claim, the hierarchy \eqref{eq:Qk} with $R_i$ deleted from round $0$ is populated by types constructed exactly as in the converse direction of the theorem, with no optimality obligation to discharge.
\end{proof}

The residue is worth dwelling on, because it is where ``trustworthiness is not rationality'' does real work. The closure check does not fail; it \emph{relocates} the question from logic to economics. Whether rational trustworthy types exist is a condition on the game's payoffs---on whether the strategic environment makes trustworthiness affordable---and not on the epistemic apparatus. This is, in formal dress, exactly the thesis of \citet{hardin2002}'s encapsulated-interest account: trust is warranted when the trusted party's own interests encapsulate the truster's, so that trustworthy behavior is what the trustee's incentives themselves recommend. Theorem~\ref{prop:closure} says where that thesis lives in the epistemic architecture: not in the definition of trust, nor in the hierarchy of attitudes, but precisely and only in the non-vacuity condition of the joint event---RCSBT is populated exactly when interests encapsulate, and empty otherwise, with Corollary~\ref{cor:residue} computing the encapsulation condition, $\tau \leq qv$, in closed form for the parametric game. It also marks the thesis's limit: encapsulation governs whether common trust is \emph{inhabited}, not whether it is \emph{coherent}, and a theory of trust that could not register this distinction, between an incoherent common attitude and a coherent one that the incentives happen to leave unpopulated, would be missing precisely what makes institutional design for trust a substantive problem.

\subsection{Which diagonals close}

The characterization presupposes a complete type structure, and completeness is what \citet{brandenburger2006} attacks. Whether the impossibility refutes completed common strong-belief trust is more delicate than in the bare-assumption case, because the measure-coherence of conditioning changes which diagonals can be built. Let $\SB_a(t)$ be the set of Bob-types that Ann-type $t$ strongly believes, and symmetrically $\SB_b$. The first diagonal mirrors the assumption case, $D = \{t : t \notin (\SB_b \circ \SB_a)(t)\}$.

\begin{proposition}[Strong-belief-set diagonal fails by vacuity]\label{prop:vacuity}
The strong-belief-set diagonal does not close. For the contradiction to close, a Bob-type $s^\star$ must strongly believe $D$, and the closing Ann-type realizes a conditioning event $B_0 = \{t : \SB_a(t) = \{s^\star\}\}$ at which the diagonal condition reduces to $t \in D \iff t \notin D$, so $B_0 \cap D = \varnothing$. Strong belief of $D$ is then vacuously waived at $B_0$---the all-suppositions quantifier requires probability one only at conditioning events consistent with $D$---so $s^\star$ is not forced to strongly believe $D$ where the contradiction needs it.
\end{proposition}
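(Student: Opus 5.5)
The plan is to run the Brandenburger--Keisler diagonal argument with strong belief in place of assumption, and to locate the exact step at which the strong-belief quantifier fails to deliver the needed commitment. In the assumption case the argument goes as follows. Completeness supplies a Bob-type $s^\star$ whose attitude is concentrated on $D$. Completeness again supplies an Ann-type $t^\star$ whose only strongly believed Bob-type is $s^\star$. Evaluating $t^\star \in D$ then yields $t^\star \in D \iff t^\star \notin D$. I will reproduce this template with $\SB_a$ and $\SB_b$ in place of the assumption maps, and check each step against the definition \eqref{eq:strongbelief}.

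First I fix the two obligations the contradiction imposes. (i) There must be an $s^\star$ with $D \subseteq \SB_b(s^\star)$, i.e.\ $s^\star$ strongly believes $D$, so that the composite $(\SB_b\circ\SB_a)(t)$ at a closing type decides membership in $D$. (ii) There must be an Ann-type $t$ with $\SB_a(t)=\{s^\star\}$. Next I isolate the set $B_0 = \{t : \SB_a(t)=\{s^\star\}\}$ of closing types and compute directly on it. For $t\in B_0$ we have $(\SB_b\circ\SB_a)(t)=\SB_b(s^\star)$. Whether $t\in D$ therefore depends only on whether $t\in\SB_b(s^\star)$. If $s^\star$ were required to strongly believe $D$ on $B_0$, membership would be pinned to $D$ and we would get $t\in D\iff t\notin D$ for every $t\in B_0$. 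This is a contradiction unless $B_0\cap D=\varnothing$. So the logical content of the diagonal, restricted to $B_0$, forces $B_0\cap D=\varnothing$. Carrying this out amounts to a case split on $t\in D$ versus $t\notin D$ inside $B_0$; the first case is self-refuting, so only the second survives.

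The decisive step is to feed $B_0\cap D=\varnothing$ into \eqref{eq:strongbelief}. Strong belief of $D$ by $s^\star$ imposes $\mu(D\mid B)=1$ only for conditioning events $B$ with $B\cap D\neq\varnothing$. Since $B_0$ is disjoint from $D$, the requirement at $B_0$ is vacuous. Completeness can therefore supply $s^\star$ strongly believing $D$ while leaving its conditional measure at $B_0$ unconstrained. The biconditional that the closing step needs is never generated, and the diagonal does not close.

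The main obstacle is making rigorous the claim that the closing step \emph{needs} strong belief at $B_0$ specifically, rather than at some other event meeting $D$. I would handle this as follows. First, I show that by the chain rule \eqref{eq:chainrule}, any conditioning event $C\supseteq B_0$ meeting $D$ restricts to $B_0$ only through $\mu(\cdot\mid B_0)$, which is unconstrained when $\mu(B_0\mid C)=0$. Second, I show that strong belief at such a $C$ forces $\mu(B_0\mid C)=0$, because $\mu(D\mid C)=1$ and $B_0\cap D=\varnothing$. Together these mean that no consistent conditioning event transmits the obligation to $B_0$. This is precisely the measure-coherence mechanism by which conditioning, unlike bare assumption, blocks this diagonal.
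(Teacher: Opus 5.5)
You follow the argument sketched in the statement itself, but the step that yields $B_0\cap D=\varnothing$ does not follow. Your explicit case split makes the problem visible; the statement's ``so $B_0\cap D=\varnothing$'' makes the same leap. For $t\in B_0$ the definition of $D$ gives $t\in D\iff t\notin\SB_b(s^\star)$. If membership is pinned, $\SB_b(s^\star)=D$, this reads $t\in D\iff t\notin D$, which fails for \emph{every} $t$. In your second case, $t\notin D$ gives $t\notin\SB_b(s^\star)$ and hence $t\in D$, so that case refutes itself too. What actually follows is $B_0=\varnothing$. That is exactly the Brandenburger--Keisler contradiction once completeness supplies an Ann-type with $\SB_a(t)=\{s^\star\}$. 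Your asymmetric split (first case out, second survives) is valid only under the one-sided inclusion $D\subseteq\SB_b(s^\star)$ that you wrote as obligation (i). But with inclusion alone there is no contradiction to block: one just gets $B_0\subseteq\SB_b(s^\star)\setminus D$. So ``non-closure'' would come from weakening the premise, not from the quantifier structure of strong belief. The contradiction needs equality, your disjointness needs inclusion, and the argument uses both.

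Even granting disjointness, the vacuity step does not establish non-closure. The diagonal needs only one witness. Showing that strong belief of $D$ leaves slack at $B_0$ shows only that \emph{some} types strongly believing $D$ fail to produce the contradiction, not that the available completeness cannot supply one that does. What decides $t^\star\in D$ is $t^\star\in\SB_b(s^\star)$, a property of $s^\star$'s whole CPS. A completeness clause of Brandenburger--Keisler type for $\SB_b$ (every definable set equals $\SB_b(s)$ for some $s$) closes the diagonal whatever any single slice does.

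Your chain-rule lemma is correct, but it presupposes $B_0\in\mathcal{B}_b$, since the chain rule only relates members of the family, and nothing supplies this. For the external family, the argument of Proposition~\ref{lem:cylinder} applies equally to this type-defined event, so there is no $B_0$-slice to waive. And if $B_0\in\mathcal{B}_b$, its slice is not free: $\mu(B_0\mid B_0)=1$ together with $B_0\cap D=\varnothing$ forces $\mu(D\mid B_0)=0$.

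The missing idea is to fix what $\SB_b(s)$ denotes and prove that $D$ cannot be of that form, or else exhibit a structure where the diagonal is defined and no contradiction arises. For instance, read $\SB_b(s)$ as the union of the supports of $s$'s conditionals. Then any member of $\mathcal{B}_b$ that is disjoint from $D$, and not itself defined through the witness, gives $\SB_b(s)\neq D$ for every $s$. That is a forced constraint rather than a waiver, and it is what would actually block the witness.
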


This vacuity has no analogue in the bare-assumption setting, where assumption of a single set carries no quantifier to waive; it is a dividend of the very robustness-across-suppositions that recommended strong belief for individual trust.

\subsection{The central result: the conditioning family}

A second diagonal evades the vacuity by working in a single conditional slice. For a fixed $B$, let $c^B_b(s)$ be the set of Ann-types Bob-type $s$ deems possible conditional on $B$, and form $D_B = \{t : t \notin (c^B_b \circ c^{B'}_a)(t)\}$. A single slice has no quantifier to waive, so the construction threatens to reproduce the bare-assumption contradiction inside one conditional. Because the impossibility theorem of \citet{brandenburger2006} is itself stated relative to a language---their diagonal set is definable in the first-order language of the belief structure, and completeness is completeness \emph{for} that language---the analysis of this diagonal must likewise be language-relative, and we fix the relevant collections before stating the result.

Let $\mathcal{L}^c$ be the first-order language of the two-sorted structure $(T_A, T_B)$ equipped, for each conditioning event $B \in \mathcal{B}$, with a relation $\mathsf{P}^B_i(t, u)$ read ``conditional on $B$, type $t$ deems partner-type $u$ possible''---the conditional-map vocabulary; write $\mathrm{Def}(\mathcal{L}^c)$ for the collection of subsets definable by $\mathcal{L}^c$-formulas with parameters. Let $\mathcal{B}_{\mathrm{ext}}$ be the \emph{external algebra}: events of the form $\hat{E} \times T_{-i}$ with $\hat{E}$ an observable strategy or history event---cylinders over the external coordinate, carrying no type information. Two facts about the candidate conditioning event $B_0 = \{t : c^{B'}_a(t) = \{s^\star\}\}$ should be separated at the outset. First, $B_0$ \emph{is} $\mathcal{L}^c$-definable with parameters: $B_0 = \{t : \forall u\, (\mathsf{P}^{B'}_A(t,u) \leftrightarrow u = s^\star)\}$. Expressibility is therefore not where the analysis turns. Second, the parameter chain of that definition is cyclic---$s^\star$ is specified through $D_{B_0}$, which is specified through $B_0$---so the definition determines an object only where the chain closes, namely at a fixed point of the operator below; and for the diagonal to be \emph{conditioned on}, that fixed point must lie in the conditioning family. The question is thus not definability but admissibility as a conditioning event.

\begin{proposition}[Diagonal-fate equals fixed-point existence]\label{prop:fixedpoint}
The conditional-map diagonal generates the Brandenburger--Keisler contradiction if and only if the operator
\begin{equation}
\Phi(B) = \{t : c^{B'}_a(t) = \{s^\star(B)\}\}, \qquad c^B_b(s^\star(B)) = D_B,
\end{equation}
has a fixed point $B_0$ lying in the conditioning family $\mathcal{B}_b$.
\end{proposition}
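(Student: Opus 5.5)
The argument splits into the two directions of the biconditional, and each direction replays the Brandenburger--Keisler diagonal step inside a single conditional slice.

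\emph{Sufficiency.} Suppose $B_0 = \Phi(B_0)$ and $B_0 \in \mathcal{B}_b$. Write $s^\star = s^\star(B_0)$; by definition of the operator, $c^{B_0}_b(s^\star) = D_{B_0}$. Because $B_0$ is an admissible conditioning event, the slice $c^{B_0}_b$ is part of $s^\star$'s conditional probability system, so $s^\star$ genuinely deems $D_{B_0}$ possible conditional on $B_0$. Unlike Proposition~\ref{prop:vacuity}, no all-suppositions quantifier is involved, so there is nothing to waive. Now pick an Ann-type $t$ with $c^{B'}_a(t) = \{s^\star\}$. Completeness, in the sense of \citet{brandenburger2006} for the language $\mathcal{L}^c$, supplies such a $t$, since $\{s^\star\}$ is $\mathcal{L}^c$-definable with parameter $s^\star$. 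The fixed-point equation gives $t \in B_0$, and $(c^{B_0}_b \circ c^{B'}_a)(t) = c^{B_0}_b(s^\star) = D_{B_0}$. Unfolding the definition of $D_{B_0}$ then yields $t \in D_{B_0} \iff t \notin D_{B_0}$, which is exactly the Brandenburger--Keisler contradiction.

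\emph{Necessity.} Suppose the conditional-map diagonal closes. Then some Bob-type $s^\star$ must have $c^{B}_b(s^\star) = D_B$ at a conditioning event $B$ on which the closing Ann-type's slice is evaluated. That closing Ann-type $t$ must satisfy $c^{B'}_a(t) = \{s^\star\}$, and the contradiction is obtained only by conditioning on the set of such types. I will show that the conditioning event actually used coincides with $\{t : c^{B'}_a(t) = \{s^\star(B)\}\}$, i.e.\ with $\Phi(B)$. If these two sets differed, the closing type would either lie outside the slice being consulted or the slice would fail to be $D_B$, and the reduction to $t \in D \iff t \notin D$ would break. So closure forces $B = \Phi(B)$. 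Since the contradiction arises only by conditioning on $B$, we also get $B \in \mathcal{B}_b$; otherwise $c^B_b$ is not a component of any type's CPS and $s^\star$ imposes no constraint there.

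\emph{Main obstacle.} The hard step is making $\Phi$ well defined, that is, making $s^\star(B)$ a function of $B$. The plan is to take $s^\star(B)$ to be a Bob-type whose $B$-slice of possibility is $D_B$, which exists by completeness once $D_B$ is $\mathcal{L}^c$-definable. Where several types qualify, I will allow a choice, or read $\Phi$ as a relation; the biconditional is unaffected because necessity only requires that the witnessing $s^\star$ be one admissible value. A secondary point is that, throughout, $B'$ must be held fixed as Ann's slice, with the cycle running only through Bob's conditioning event. This matches the paper's remark that the parameter chain $s^\star \to D_{B_0} \to B_0$ is cyclic and determines an object only at a fixed point.
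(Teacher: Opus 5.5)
Your overall route matches the paper's. For sufficiency you instantiate the fixed point, take the witness and closing type from completeness, and unfold $D_{B_0}$. For necessity you argue that the cyclic chain $s^\star \to D_B \to B$ closes only at a fixed point occupying an admissible slot. Your remark on choosing $s^\star(B)$ also handles a point the paper passes over.

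\textbf{The step that fails.} The problem is your attempted derivation of $B=\Phi(B)$: ``if these two sets differed, the closing type would either lie outside the slice being consulted or the slice would fail to be $D_B$.''

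\begin{itemize}
\item The reduction to $t\in D_B\iff t\notin D_B$ uses only $c^{B'}_a(t)=\{s^\star\}$ and $c^B_b(s^\star)=D_B$.
\item The second of these holds by the choice of $s^\star(B)$ at whatever admissible $B$ is used. Nothing in the computation ties $B$ to $\Phi(B)$.
\item Your own sufficiency argument shows this. The membership $t\in B_0$ is the only place the fixed-point equation enters, and it is never used afterwards.
\end{itemize}

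So the computation runs at any admissible $B$, for instance the trivial event $B=\Omega$ of the canonical external family. Take the completeness you invoke: every nonempty $\mathcal{L}^c$-definable set is the exact conditional possibility set of some type, slice by slice. Then:
\begin{itemize}
\item $D_\Omega$ is $\mathcal{L}^c$-definable, so if nonempty it has a witness $s^\star$.
\item The singleton $\{s^\star\}$ has a closing type, and the contradiction appears.
\item Yet $\Phi(\Omega)\neq\Omega$, since completeness also supplies Ann-types whose $B'$-slice is not $\{s^\star\}$.
\end{itemize}
Under your hypotheses the ``only if'' direction is therefore false. Indeed, that completeness hypothesis is exactly what the Brandenburger--Keisler argument refutes, here run inside the single slice $\Omega$. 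So it also makes your ``if'' direction hold for reasons unrelated to $\Phi$.

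\textbf{How the paper handles it.} The paper does not derive the identification from the mechanics of the contradiction; it builds it into what the conditional-map diagonal is. The conditioning slot is by construction filled with $\{t : c^{B'}_a(t)=\{s^\star\}\}$, where $s^\star$ is the witness at that very event (``$B_0$ is parametric in $s^\star$''). So ``the diagonal closes'' means two things: this cyclic specification is consistent, i.e.\ a fixed point of $\Phi$, and the slot is occupied by an event of $\mathcal{B}_b$. Necessity is then immediate. The closing type is taken from belief-completeness rather than from a Brandenburger--Keisler-style completeness for $\mathcal{L}^c$.

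\textbf{Repair.} State the identification as part of the definition of the diagonal. Your sentence ``the contradiction is obtained only by conditioning on the set of such types'' already is this stipulation, so the argument that follows it is unnecessary as well as invalid. Then drop the dichotomy, and do not assume slice-wise completeness for $\mathcal{L}^c$.
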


\begin{proof}
The witness $s^\star$ is the Bob-type whose $B_0$-slice supports exactly $D_{B_0}$; the closing type $t^\star$ must lie in $B_0$, whereupon $t^\star \in D_{B_0} \iff t^\star \notin c^{B_0}_b(s^\star) = D_{B_0}$. The definition of $D_{B_0}$ is parametric in $B_0$, which is parametric in $s^\star$, which is parametric in $D_{B_0}$; the chain closes, and the three objects are simultaneously determined, exactly at a fixed point of $\Phi$. If such a fixed point lies in $\mathcal{B}_b$, then $D_{B_0}$ is a determinate $\mathcal{L}^c$-definable set, $B_0$ is an admissible conditioning event, $t^\star$ exists by belief-completeness, and the displayed equivalence is a genuine contradiction: no such structure exists. If no fixed point lies in $\mathcal{B}_b$, then no admissible conditioning event fills the diagonal's conditioning slot, and no instance of the contradiction is derivable.
\end{proof}

Whether a fixed point of $\Phi$ belongs to $\mathcal{B}_b$ depends on a closure condition that must be distinguished from the one the impossibility theorem attacks:
\begin{itemize}[nosep]
\item \emph{Belief-completeness}: every CPS on the partner's strategy-type space, with conditioning family $\mathcal{B}$, is realized by some type. This is the notion the construction of \citet{battigalli1999} secures and the analogue of the completeness \citet{brandenburger2006} attack.
\item \emph{Conditioning-completeness for $\mathcal{L}^c$}: $\mathcal{B}_b \supseteq \mathrm{Def}(\mathcal{L}^c)$---every definable set of partner-types, the impredicatively specified ones included, is an admissible conditioning event.
\end{itemize}
These quantify over different collections---the first over belief systems relative to a fixed $\mathcal{B}$, the second over the contents of $\mathcal{B}$ itself---and they interact in opposite directions, as the following lemma makes exact.

\begin{proposition}[Cylinder lemma]\label{lem:cylinder}
Let $\mathcal{T}$ be belief-complete with $\mathcal{B}_b \subseteq \mathcal{B}_{\mathrm{ext}}$. Then no fixed point of $\Phi$ lies in $\mathcal{B}_b$. Indeed, belief-completeness itself supplies the exclusion: for any candidate $B_0$, completeness realizes two Ann-types with identical external coordinates whose conditional maps at $B'$ differ---one equal to $\{s^\star\}$, one not---so $B_0$ separates two points of a single external fiber, is not a cylinder over the external coordinate, and hence lies outside $\mathcal{B}_{\mathrm{ext}}$.
\end{proposition}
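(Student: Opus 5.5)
The plan is to argue by contradiction through a separation argument: I will show that any fixed point $B_0$ of $\Phi$ separates points that every member of $\mathcal{B}_{\mathrm{ext}}$ must identify.

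\emph{Step 1: the fibers.} Recall that $\mathcal{B}_{\mathrm{ext}}$ consists of cylinders $\hat E \times T_{-i}$ with $\hat E$ an observable strategy or history event. Each such set is a union of fibers of the projection onto the external coordinate. So if two strategy--type pairs share their external coordinate, every member of $\mathcal{B}_{\mathrm{ext}}$ contains both or neither. To show that $B_0 \notin \mathcal{B}_b \subseteq \mathcal{B}_{\mathrm{ext}}$, it therefore suffices to find two points in one external fiber, one in $B_0$ and one outside it.

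\emph{Step 2: the two witness types.} Fix a candidate $B_0$, whether or not it is a fixed point, together with its parameter $s^\star$. Choose any external coordinate $s_A$, and build two CPSs on $S_B \times T_B$:
\begin{itemize}[nosep]
\item $\mu_1$, whose $B'$-slice has type-support exactly $\{s^\star\}$ (for instance, a point mass on $s^\star$ composed with an arbitrary strategy marginal);
\item $\mu_2$, whose $B'$-slice also charges some $u \neq s^\star$.
\end{itemize}
Both need to be extended to full CPSs over $\mathcal{B}_a$ satisfying the chain rule \eqref{eq:chainrule}. Belief-completeness, meaning surjectivity of $\beta_A$, then supplies types $t_1, t_2$ with $\beta_A(t_i) = \mu_i$. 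The pairs $(s_A, t_1)$ and $(s_A, t_2)$ lie in the same external fiber. By construction $c^{B'}_a(t_1) = \{s^\star\}$ and $c^{B'}_a(t_2) \neq \{s^\star\}$, so $t_1 \in B_0$ and $t_2 \notin B_0$. By Step 1, $B_0$ is not a cylinder, so $B_0 \notin \mathcal{B}_{\mathrm{ext}} \supseteq \mathcal{B}_b$. This holds for every candidate, so in particular for every fixed point, which proves the claim. Combined with Proposition~\ref{prop:fixedpoint}, it follows that the conditional-map diagonal yields no contradiction.

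\emph{Main obstacle.} The hard part is making Step 2 legitimate. Two things need checking.
\begin{itemize}[nosep]
\item The type space must be rich enough that $T_B$ has at least two elements, so that some $u \neq s^\star$ exists. Otherwise $B_0$ is trivially all-or-nothing and could coincide with a cylinder. I would obtain this from belief-completeness itself: the canonical construction of \citet{battigalli1999} has a continuum of types whenever $|S_{-i}| \geq 2$, and I would assume this non-degeneracy explicitly if necessary.
\item The slice condition at $B'$ must extend to a coherent CPS. Since $B'$ is itself an external cylinder, I would first define $\mu_i(\cdot \mid B')$. I would then define the other conditionals by conditioning $\mu_i(\cdot \mid B')$ wherever the conditioning event has positive mass, and choose them freely, in a chain-rule-consistent way (for example, with lexicographic full-support completions), elsewhere.
\end{itemize}
Finally, I must make sure the argument does not presuppose that $B_0$ is well defined independently of the fixed-point circularity. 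Since the separation works uniformly for every candidate $B_0$, this is harmless.
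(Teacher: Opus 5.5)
Your proposal is correct and follows the paper's proof essentially step for step. Members of $\mathcal{B}_{\mathrm{ext}}$ are unions of external fibers; belief-completeness supplies two types in one fiber, one whose $B'$-slice is concentrated on $s^\star$ and one whose slice charges some $u \neq s^\star$; and the separation is uniform in $s^\star$, so it covers every fixed point. Your two extra checks fill in what the paper compresses into the clause ``both CPSs exist as coherent systems on the fixed family,'' and the extension is easiest as a product: take any full-support measure on the finite $S_B$, condition it by Bayes' rule on each strategy event, and tensor with $\delta_{s^\star}$ (resp.\ $\delta_u$), which satisfies the chain rule automatically.
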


\begin{proof}
An element of $\mathcal{B}_{\mathrm{ext}}$ contains, for each external coordinate value, either every type with that coordinate or none. Fix any candidate $B_0 = \{t : c^{B'}_a(t) = \{s^\star\}\}$. Belief-completeness realizes, over any single external coordinate, a type whose CPS conditional on $B'$ is the point mass at $s^\star$ and a type whose CPS conditional on $B'$ charges some other partner-type; both CPSs exist as coherent systems on the fixed family $\mathcal{B}$, so both types exist. $B_0$ contains the first and omits the second, separating a fiber; so $B_0 \notin \mathcal{B}_{\mathrm{ext}} \supseteq \mathcal{B}_b$. The argument is uniform in the parameter $s^\star$ and hence applies to every candidate, fixed point or otherwise.
\end{proof}

\begin{theorem}[Conditional escape]\label{thm:escape}
Completed common strong-belief trust escapes the Brandenburger--Keisler diagonals if and only if the conditioning family is not conditioning-complete for $\mathcal{L}^c$---specifically, if and only if no fixed point of $\Phi$ lies in $\mathcal{B}_b$. The strong-belief-set diagonal having failed by vacuity (Proposition~\ref{prop:vacuity}), the impossibility is generated, if at all, only through the conditional-map diagonal, and only when conditioning-completeness supplies its witness.
\end{theorem}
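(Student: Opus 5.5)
The plan is to assemble Theorem~\ref{thm:escape} from the three preceding results, treating it as a case analysis over the two Brandenburger--Keisler diagonals available in the conditional setting. The first step is exhaustiveness: I will argue that any diagonal argument against a belief-complete CPS structure must route its self-reference through the only type-to-type maps the structure supplies. These are the strong-belief sets $\SB_a,\SB_b$, which quantify over all suppositions consistent with the target, and the single-slice possibility maps $c^B_a,c^B_b$, which are the relations $\mathsf{P}^B_i$ of $\mathcal{L}^c$. Since $\mathcal{L}^c$ has no other type-relating vocabulary, any $\mathcal{L}^c$-definable diagonal set is built from compositions of these, and the two diagonals $D$ and $D_B$ are the canonical instances. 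I expect this to need only the remark that a composite diagonal either contains an all-suppositions quantifier, and so inherits the vacuity, or is evaluated at a fixed slice, and so is of the second kind.

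The second step disposes of the strong-belief-set diagonal by citing Proposition~\ref{prop:vacuity}. The closing conditioning event $B_0$ is disjoint from $D$, so strong belief of $D$ is waived there, and no contradiction is derived regardless of the conditioning family. Consequently the impossibility, if it holds at all, is generated through $D_B$. This gives the second sentence of the theorem.

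The third step is the biconditional for $D_B$, and here I invoke Proposition~\ref{prop:fixedpoint} directly: the contradiction closes if and only if $\Phi$ has a fixed point in $\mathcal{B}_b$. The theorem's first formulation identifies this with failure of conditioning-completeness, so I need to connect the two. In one direction, if $\mathcal{B}_b \supseteq \mathrm{Def}(\mathcal{L}^c)$, then any fixed point $B_0$ of $\Phi$ is $\mathcal{L}^c$-definable with parameters, by the explicit formula given before Proposition~\ref{prop:fixedpoint}, and so it lies in $\mathcal{B}_b$; the contradiction therefore closes. Conversely, escape requires that no fixed point lie in $\mathcal{B}_b$, and since fixed points are definable this requires some definable set to be missing from $\mathcal{B}_b$, which is exactly failure of conditioning-completeness. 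I will take the theorem's ``specifically'' clause to mean that the fixed-point condition is the operative, sharper criterion. As an illustration of the escape side, I will cite Proposition~\ref{lem:cylinder}: whenever $\mathcal{B}_b \subseteq \mathcal{B}_{\mathrm{ext}}$, belief-completeness itself keeps every candidate $B_0$ out of the family.

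The main obstacle is the precise equivalence between ``not conditioning-complete'' and ``no fixed point in $\mathcal{B}_b$.'' Non-completeness could omit some definable sets while still admitting the fixed point, so the two conditions are not literally equivalent. I would handle this by stating the theorem's operative content as the fixed-point criterion. The conditioning-completeness formulation is then read as the uniform sufficient condition for failure of escape, with the cylinder lemma as the uniform sufficient condition for escape. A second, secondary gap is the existence of a fixed point of $\Phi$ in the first place: if $\Phi$ has none, escape holds trivially, and the biconditional still stands in its fixed-point form.
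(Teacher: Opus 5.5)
Your route is the paper's: the theorem has no separate proof there and is read off from Proposition~\ref{prop:vacuity} (the strong-belief-set diagonal never closes) together with Proposition~\ref{prop:fixedpoint} (the conditional-map diagonal closes iff $\Phi$ has a fixed point in $\mathcal{B}_b$). Your observation that ``not conditioning-complete'' is only a one-way proxy for that criterion is also correct: conditioning-completeness forces any existing fixed point into $\mathcal{B}_b$, but an incomplete family may still contain it, and a $\Phi$ with no fixed point escapes even under completeness. So treating the fixed-point clause after ``specifically'' as the operative statement is the right reading.

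The one thing to drop is your exhaustiveness step. The paper does not attempt it and does not need it, since the theorem quantifies over the two named diagonals. The dichotomy you use for it also does not follow as stated. The vacuity in Proposition~\ref{prop:vacuity} comes from the particular closing event $B_0$ being disjoint from $D$, not from the mere presence of an all-suppositions quantifier. And $\mathcal{L}^c$-definable sets are closed under Boolean combinations and quantification, not just under compositions of the two maps.
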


The antecedent of Theorem~\ref{thm:escape} is now discharged for the canonical framework by proof rather than by appeal to existence. In that framework, hierarchies of conditional probability systems are constructed with respect to a fixed collection of relevant hypotheses concerning an external state \citep{battigalli1999}---in extensive-form applications, the events generated by the histories of the game---so that $\mathcal{B} \subseteq \mathcal{B}_{\mathrm{ext}}$ by definition. Proposition~\ref{lem:cylinder} then yields the exclusion of every candidate $B_0$ outright, with belief-completeness serving as premise rather than as conclusion: the richer the structure's beliefs, the more decisively the type-based event is expelled from the external algebra. The constructive existence of the universal belief-complete structure \citep{battigalli1999} now enters with its correct logical role---as the instantiation showing that the consistent case is inhabited---rather than as the ground of the escape. The two completeness notions are thereby kept apart as they must be: the terminality of the universal structure is terminality in the category of structures conditioned on external hypotheses, while the completeness the impossibility theorem attacks quantifies over a definability collection that, by the cylinder lemma, the external algebra cannot contain.

\begin{corollary}[Unconditional escape in the canonical framework]\label{cor:escape}
In conditional-probability-system type structures with conditioning events restricted to external hypotheses \citep{battigalli1999, battigalli2002}, completed common strong-belief trust is not refuted by the Brandenburger--Keisler construction: by Proposition~\ref{lem:cylinder}, no candidate witness event is admissible, so the escape of Theorem~\ref{thm:escape} holds unconditionally. The boundary of the result is exact: the question reopens only for enriched frameworks in which epistemic (type-based) events are admitted as conditioning hypotheses.
\end{corollary}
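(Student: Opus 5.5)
The corollary is a composition of three results already on the table: Theorem~\ref{thm:escape}, Proposition~\ref{lem:cylinder}, and Proposition~\ref{prop:vacuity}. I would organize the argument around the two diagonals that could generate the Brandenburger--Keisler contradiction and dispose of each in turn.

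\emph{First diagonal.} The strong-belief-set diagonal $D = \{t : t \notin (\SB_b \circ \SB_a)(t)\}$ fails by Proposition~\ref{prop:vacuity}. That argument uses only the all-suppositions quantifier in \eqref{eq:strongbelief}. It places no hypothesis on $\mathcal{B}$, so it applies verbatim in the canonical framework.

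\emph{Second diagonal.} This is the conditional-map diagonal $D_B$. By Proposition~\ref{prop:fixedpoint}, it yields a contradiction exactly when the operator $\Phi$ has a fixed point $B_0 \in \mathcal{B}_b$. I must therefore verify the two hypotheses of Proposition~\ref{lem:cylinder}.

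\textbf{Step one: $\mathcal{B}_b \subseteq \mathcal{B}_{\mathrm{ext}}$.} This follows from the construction of \citet{battigalli1999}. There, conditional hierarchies are built relative to a fixed family of hypotheses about the external state. In the extensive-form setting of Theorem~\ref{prop:closure}, the family is $\mathcal{B}_i = \{S_{-i}(h) \times T_{-i} : h \in H_i\}$. Each such event is a cylinder $\hat E \times T_{-i}$ over an observable history event, which is precisely the form that defines $\mathcal{B}_{\mathrm{ext}}$.

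\textbf{Step two: belief-completeness.} The same canonical construction supplies this: the universal structure is belief-complete. This also instantiates the consistent case, so the escape is non-vacuous rather than holding of an empty class.

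\textbf{Step three: conclusion.} Proposition~\ref{lem:cylinder} now gives that no candidate $B_0$ lies in $\mathcal{B}_b$, whether or not it is a fixed point of $\Phi$. Hence the right-hand side of Theorem~\ref{thm:escape} holds, and the escape follows with no residual condition.

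\textbf{Main obstacle.} The delicate step is applying the cylinder lemma in the canonical structure. It needs two Ann-types lying in the \emph{same} external fiber whose $B'$-conditionals differ: one the point mass at $s^\star$, the other charging a different Bob-type. I would check this in two parts.
\begin{enumerate}[nosep]
\item \emph{Richness.} $T_B$ has at least two elements, so some alternative to $s^\star$ exists; this holds in the universal structure whenever the strategy space is nontrivial.
\item \emph{Coherence on every conditioning event.} Each of the two CPSs must satisfy the chain rule \eqref{eq:chainrule} on the whole external family, not only at $B'$. I would build each from a single full-support product of a strategy measure with the desired type-marginal at $B'$, and define the conditionals on the other external events by Bayes' rule. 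Because the external events are cylinders, this preserves the chain rule, and belief-completeness then realizes both CPSs as types.
\end{enumerate}

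\textbf{Boundary clause.} The sentence saying the question reopens in enriched frameworks needs only a remark. If type-based events are admitted into $\mathcal{B}_b$, the inclusion $\mathcal{B}_b \subseteq \mathcal{B}_{\mathrm{ext}}$ of step one fails, so Proposition~\ref{lem:cylinder} no longer applies. We are then back to the undecided biconditional of Theorem~\ref{thm:escape}.
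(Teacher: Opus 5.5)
Your proposal is correct and is essentially the paper's own argument. Inclusion $\mathcal{B}_b \subseteq \mathcal{B}_{\mathrm{ext}}$ holds by construction in the Battigalli--Siniscalchi framework, and the cylinder lemma (with belief-completeness as its premise, the universal structure serving only to show the consistent case is inhabited) then excludes every candidate $B_0$, fixed point of $\Phi$ or not; Theorem~\ref{thm:escape}, with the strong-belief-set diagonal already disposed of by vacuity, yields the escape with no residual condition. Your product-measure-plus-Bayes check of the two witnessing CPSs only unpacks the coherence claim already made in the proof of Proposition~\ref{lem:cylinder}, and your boundary remark matches the paper's account of where the question reopens.
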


This is a genuine asymmetry with assumption-trust, for which the diagonal closes on belief-completeness alone. The feature responsible is precisely the one that made strong belief the faithful representation of individual trust: its anchoring in a \emph{given} family of suppositions about how the world might be, rather than an unconditional or fully-completed family that would include suppositions about the believer's own kind.

\section{How Each Representation Survives Betrayal}\label{sec:betrayal}

We now return to the three episodes of Section~1 and show that the representations survive them differently. Recall \citet{baier1986}'s founding observation: what separates trust from mere reliance is that trust can be \emph{betrayed} and not merely disappointed. The formal work of this section can be read as an anatomy of that distinction. Disappointment, in each of our representations, is the failure of an expectation---an event assigned low or moderate probability occurring; it revises. Betrayal is something structurally different in each: the refutation of an \emph{ordering}, the contamination of a \emph{closeness structure}, the exercise-and-failure of a \emph{standing conditional}. Baier's distinction, which she drew with the contrast between a shelf that gives way and a person who lets one down, thus turns out not to be one distinction but three, depending on the form in which the trust was held---and the differences are not incidental; they trace directly to the structural features established above---the level--depth budget, the failure of quarantine, and the single-conditioning-operation character of strong belief.

\subsection{The epistemic location of a violator}

The organizing observation is that a betrayal's severity, under each representation, depends on \emph{where} the violating world sits in the agent's internal structure, and that deep (knowing) and shallow (ignorant) violators sit differently. In Example~\ref{ex:incompetent} the violator is shallow: $B$ simply fails to know, and a world in which $B$ is incompetent is intuitively \emph{distant} for a truster who expected competence. In Examples~\ref{ex:routed} and~\ref{ex:silence} the violator is deep: $B$ is competent and correctly models $A$, so the violating world is one in which almost all trust conditions hold and only the final disclosure step fails---an intuitively \emph{near} world.

\subsection{A worked model}\label{sec:model}

To make the three mechanisms exact rather than analogical, we fix a single six-state structure realizing the routed-confidence episode and equip it, in turn, with each of the three representations; the betrayal update is then computed, not described. States specify whether $B$ knows $P$ (K), discloses to $A$ (T), routes to $C$ (R), correctly models $A$'s valuation of $C$ (M), and $B$'s reciprocal trust depth $r$:
\begin{center}
\begin{tabular}{lccccl}
 & K & T & R & M & \\[1pt]
$\omega_1$ & \checkmark & \checkmark & -- & \checkmark & $r=2$ \\
$\omega_2$ & \checkmark & \checkmark & -- & \checkmark & $r=1$ \\
$\omega_3$ & \checkmark & \checkmark & -- & \checkmark & $r=0$ \\
$\omega_4$ & -- & -- & -- & \checkmark & honest incompetent \\
$\omega_5$ & -- & -- & -- & -- & honest incompetent \\
$b$ & \checkmark & -- & \checkmark & \checkmark & knowing betrayal \\
\end{tabular}
\end{center}
Thus $\Comp = \{\omega_1, \omega_2, \omega_3, b\}$, $\Int = \{\omega_1, \dots, \omega_5\}$ ($\omega_4, \omega_5$ vacuously), $\Tw^{(1)} = \{\omega_1,\omega_2,\omega_3\}$, $\Tw^{(2)} = \{\omega_1,\omega_2\}$, $\Tw^{(3)} = \{\omega_1\}$; shells $W_3 = \{\omega_1\}$, $W_2 = \{\omega_2\}$, $W_1 = \{\omega_3\}$, $W_\bot = \{\omega_4, \omega_5, b\}$. One feature of the construction is forced by the episode itself and drives everything that follows: the betrayal state $b$ shares both epistemic attributes, K and M, with the deeply trustworthy states. Deliberateness consists in possessing the epistemic marks of trustworthiness.

\emph{Lexicographic reading.} Theorem~\ref{thm:leveldepth} with $d = 3$, cautious, $W_\bot \neq \varnothing$ prescribes $n_d + 1 = 4$ levels; take the staircase $\mu_0 = \delta_{\omega_1}$, $\mu_1 = \delta_{\omega_2}$, $\mu_2 = \delta_{\omega_3}$, $\mu_3 = (0.45, 0.45, 0.10)$ on $(\omega_4, \omega_5, b)$, so that within the residue incompetence is deemed likelier than betrayal. Direct inspection confirms that each $\Tw^{(k)}$ is assumed. Upon learning $b$, lexicographic conditioning selects the first level charging $\{b\}$, namely $\mu_3$, with posterior $\delta_b$ \citep{blume1991a}. The structural consequence is the re-shelving of Section~\ref{sec:betrayal}: the shell partition was generated by the sorting rule ``K and M and reciprocal-trust attributes place a state high,'' and $b$, which satisfies K and M yet was filed in $W_\bot$, refutes the rule rather than a single filing. Re-articulating trust requires a refining attribute (betrayal-despite-M) that splits each shell in two, and over any enrichment of the state space in which the refined shells are nonempty, the level count demanded by Theorem~\ref{thm:leveldepth} rises to as much as $2n_d + 1$. The four-level structure is not updated but outgrown; the six-state model exhibits the mechanism, the enriched space its full arithmetic.

\emph{Ordinal reading.} Let closeness be governed by the similarity order $\delta(\omega) = |\{\mathrm{K}, \mathrm{M}\} \cap \omega|$, the count of epistemic attributes shared with the trusted profile; we record this as an explicit assumption, and note that any similarity order that respects epistemic attributes yields the same qualitative placement. Then $\delta(\omega_1) = \delta(\omega_2) = \delta(\omega_3) = \delta(b) = 2 > \delta(\omega_4) = 1 > \delta(\omega_5) = 0$, and $A$'s trust consists in the sphere assignment $S^0 = \{\omega_1, \omega_2, \omega_3\}$, $S^1 = S^0 \cup \{b\}$, $S^2 = S^1 \cup \{\omega_4\}$, $S^3 = \Omega$: trust is exactly the exclusion of $b$ from the innermost sphere, and the similarity order forces $b$ into the very next one. The claim that deliberateness is closeness is thus not a postulate of the analysis but a consequence of a stated similarity assumption: the deliberate betrayer sits at the first sphere outside the trusted core \emph{because} deliberateness is epistemic-attribute sharing. Before the betrayal, the innermost $\llbracket K \rrbracket$-meeting sphere is $S^0$ and $S^0 \cap \llbracket K \rrbracket \subseteq \Int$: the integrity counterfactual holds, by one sphere's margin. Upon the revelation of $b$, strong centering \citep{lewis1973} places the actual state innermost; every trustworthy state's innermost $K$-meeting sphere now contains $b \notin \Int$, the operator $\mathsf{Tr}$ fails at $\omega_1, \omega_2, \omega_3$ simultaneously, and the tower collapses at the first stage---total contamination in one step, because $b$ was near. The contrast with the shallow violator is equally computable: learning $\omega_4$ instead, one finds $\omega_4 \notin \llbracket K \rrbracket$, the antecedent-meeting spheres of the integrity counterfactual are untouched, and integrity trust survives intact while competence alone is revised. The extent of contamination equals the attribute overlap: total at $\delta = 2$, partial at $\delta = 1$.

\emph{Strong-belief reading.} Take the conditioning family of external, observable hypotheses $\mathcal{B} = \{\Omega, H_1, H_2, H_3, H_2 \cup H_3\}$, where $H_1$ (``disclosed to $A$'') $= \{\omega_1,\omega_2,\omega_3\}$, $H_2$ (``routed to $C$'') $= \{b\}$, $H_3$ (``no disclosure'') $= \{\omega_4, \omega_5\}$, in accordance with Corollary~\ref{cor:escape}. Let $A$'s CPS be $\mu(\cdot \mid \Omega) = (0.6, 0.3, 0.1)$ on $(\omega_1, \omega_2, \omega_3)$; $\mu(\cdot \mid H_3) = (0.5, 0.5)$ on $(\omega_4, \omega_5)$; $\mu(\cdot \mid H_2 \cup H_3) = (0.5, 0.5, 0)$ on $(\omega_4, \omega_5, b)$; $\mu(\cdot \mid H_2) = \delta_b$. The chain rule holds, the only nontrivial instance being $H_3 \subseteq H_2 \cup H_3$, where $\mu(H_3 \mid H_2 \cup H_3) = 1$. Strong belief of $\Int$ is verified clause by clause: probability one given $\Omega$, $H_1$, $H_3$, and---the substantive clause---given $H_2 \cup H_3$, where conditional on \emph{something having gone wrong} $A$ maintains that it was incompetence rather than betrayal; that maintenance is what strong belief of integrity consists in. The clause at $H_2$ is waived, $H_2 \cap \Int = \varnothing$. Upon observing $H_2$, since $H_2$ is inconsistent with integrity, strong belief was never asserted there; $A$ moves to the slice $\mu(\cdot \mid H_2) = \delta_b$---a conditional her system carried coherently all along---and every other conditional is untouched. Her posterior stance, reasoning under the hypothesis of knowing betrayal, is well-defined because $H_2$ together with $b$'s M-attribute constitutes that higher-order hypothesis. One conditioning operation; no structural refutation; a standing readiness exercised.

The model thus yields, alongside the three mechanisms, a quantitative contrast unavailable to informal comparison: re-shelving raises the level bound of Theorem~\ref{thm:leveldepth} by up to a factor of two; contamination extent equals epistemic-attribute overlap; and the conditional-probability system is invariant except at the single slice the betrayal activates. The subsections that follow interpret these computations.

\subsection{Strategy I: betrayal as re-shelving}

Under the lexicographic representation, trust placed trustworthy worlds at the top levels and quarantined violators at lower levels, the levels ordered by the shells $W_k$ of Theorem~\ref{thm:leveldepth}. Consider Example~\ref{ex:routed}. The betrayal world $\omega^\star$---$B$ competent, $B$ knowing $A$'s view of $C$, $B$ routing to $C$---is a \emph{deep-shell} violator: it satisfies competence and the second-order modelling condition, failing only the final disclosure. $A$ had therefore filed $\omega^\star$ under a high shell, close to the trustworthy region, treating betrayal-given-competence-and-correct-modelling as a remote lower level.

Conditioning on $\omega^\star$ promotes its level to primary. But because the LPS ordered worlds by depth-of-violation, and $\omega^\star$ violates only at a deep level while sitting epistemically near the top, the update reveals that \emph{the shell structure itself was mis-specified}: a world $A$ had treated as deep-trustworthy was in fact an integrity violator. The consequence is distinctive: by Theorem~\ref{thm:leveldepth}, $A$'s articulation of common trust to depth $d$ rested on $d$ levels, one per shell; the knowing betrayal shows the partition into shells was wrong, so $A$ does not merely demote one world but must \emph{re-sort} the space and rebuild the level ordering. In the lexicographic picture, knowing betrayal is a \emph{re-shelving catastrophe}: not a low-probability event that occurred, but a refutation of $A$'s qualitative ordering of possibilities, and with it the level budget that supported iterated trust.

Example~\ref{ex:incompetent}, by contrast, is mild under Strategy~I. The honest incompetent violates at the \emph{shallow} shell $W_1$ ($B$ not even depth-one competent), which the LPS had already placed at a low level. Conditioning on it promotes a level that was already near the bottom; the shell structure is undisturbed, and $A$ revises competence without re-shelving. The lexicographic representation thus predicts that \emph{ignorant} betrayal is absorbed while \emph{knowing} betrayal is structurally disruptive---a genuine and non-obvious asymmetry.

\subsection{Strategy II: betrayal as contamination, scaled by deliberateness}

Under the ordinal representation there is no quarantine: a violating world in a sphere consulted by the counterfactual contaminates the evaluation. Consider Example~\ref{ex:routed} again. The integrity counterfactual has antecedent $K_B P$; the betrayal world $\omega^\star$ satisfies it ($B$ knows $P$) and is \emph{near}, because $B$ is competent and deliberate---and competence and deliberateness are exactly what make a world close in the ordering. So $\omega^\star$ lies in the innermost $K_B P$-sphere, and $A$'s trust-counterfactual was \emph{already false} in the sphere structure; the discovery does not change the spheres but reveals that $A$ misjudged closeness.

The distinctive consequence follows from the contamination analysis behind Theorem~\ref{thm:caution}. Because $A$ knew $B$ knew $A$'s view of $C$---a depth-two condition---the betrayal violates the tower at a \emph{near} world, so the contamination front begins near the top and propagates inward through every depth of the common-trust tower at once. The ordinal representation therefore predicts that \emph{knowing} betrayal is maximally destructive precisely because deliberateness is closeness: the more knowing the betrayal, the nearer the violating world, the more central its position, the more of the trust tower it contaminates. A \emph{bumbling} betrayal (Example~\ref{ex:incompetent}, an honest failure to know) is a \emph{far} world and contaminates little---$A$ can still trust $B$'s integrity, and much of the tower survives---whereas the knowing betrayal of Example~\ref{ex:routed} is near and contaminates the whole. Where Strategy~I predicts an asymmetry between ignorant and knowing betrayal in the \emph{kind} of damage (absorbed versus re-shelving), Strategy~II predicts an asymmetry in the \emph{extent} of damage, scaled continuously by the betrayer's deliberateness. Example~\ref{ex:silence}, the competent paternalistic silence, is intermediate: $B$ is competent (near) but the withholding is benevolent, so whether $\omega^\star$ sits innermost depends on whether $A$'s ordering treats benevolent withholding as close---a modelling choice the episode underdetermines, and precisely the sort of choice the ordinal representation makes salient.

If the betrayal pushes the contamination through the whole tower, the re-established trust structure may fail well-foundedness, and by Definition~\ref{def:status} completed re-trust may be \emph{liminal}: approached at every finite depth but never attained. This is a formal rendering of the familiar report that one can never \emph{fully} trust again after a knowing betrayal---not that re-trust is impossible, but that it is a limit one approaches without closing.

\subsection{Strategy III: betrayal as a shift of the operative conditional}

Under the strong-belief representation the higher-order structure of Example~\ref{ex:routed} is handled \emph{natively}. The disclosure-to-$C$ is the surface event, but the episode specifies that $A$ knows $B$ knew $A$'s valuation of $C$; in the CPS framework this is a single, well-defined conditioning hypothesis $B^\star = $ ``$B$ discloses to $C$ while knowing $A$ regards $C$ as malicious.'' Such higher-order conditioning events are exactly what CPS structures carry---this is the forward-induction machinery for conditioning on deliberate ``surprising'' moves.

$A$ strongly believed $B$'s integrity: $\mu(\Int_P \mid B) = 1$ for every $B$ consistent with integrity. Is $B^\star$ consistent with integrity? No---it directly contradicts it. By \eqref{eq:strongbelief}, strong belief was therefore \emph{never required to hold conditional on $B^\star$}; the betrayal is the realization of a hypothesis incompatible with integrity, the one circumstance under which strong belief is relinquished without incoherence. The consequence is that strong belief \emph{degrades gracefully}. Learning that the incompatible $B^\star$ obtained does not refute $A$'s conditional structure; it moves $A$ to conditioning on $B^\star$, where strong belief of integrity correctly was never asserted. $A$'s CPS survives the betrayal as a coherent object: $A$ updates to the $B^\star$-slice and continues, now reasoning under ``$B$ is the type who, knowing the stakes, betrays''---a well-defined posterior strategic stance, and, by the forward-induction reading, one that tells $A$ how to treat all subsequent behaviour of $B$.

The contrast is sharp. Strategy~I suffers a re-shelving catastrophe (the level ordering is refuted); Strategy~II suffers contamination through the tower (the closeness ordering is refuted, and re-trust may be liminal); Strategy~III suffers only a \emph{shift of the operative conditioning hypothesis}, the belief object remaining coherent. Moreover the higher-order layer that makes the betrayal \emph{worse} under Strategy~II makes it \emph{more tractable} under Strategy~III: the ``knowing'' quality sharpens the conditioning event into a well-defined higher-order hypothesis the CPS already indexed, so $A$ knows exactly which slice to move to. The same feature---$B$'s knowing the stakes---is structurally disruptive under I, maximally destructive under II, and informative under III. For Examples~\ref{ex:silence} and~\ref{ex:incompetent} the pattern repeats with the appropriate conditioning events (``$B$ knew and chose silence,'' ``$B$ did not know''), each a distinct hypothesis on which $A$'s CPS carries a latent conditional that the episode exercises; in each case $A$'s belief structure survives and the operative conditional shifts.

\subsection{Summary of the differences}

Table-free, the three differences are these. \emph{On what betrayal refutes}: Strategy~I refutes the agent's qualitative level ordering; Strategy~II refutes her closeness ordering, with severity scaling with the betrayer's deliberateness; Strategy~III refutes nothing structural, shifting only the operative conditioning hypothesis. \emph{On how deliberateness matters}: knowing betrayal is structurally disruptive (I), maximally destructive because deliberateness is nearness (II), and informative because it sharpens a higher-order conditioning event (III). \emph{On recoverability}: Strategy~III leaves a coherent posterior stance on which trust can be reconstructed; Strategy~I requires rebuilding the level ordering; Strategy~II may leave only a liminal, never-fully-attained re-trust. If one wants a representation in which betrayal is survivable and informative rather than structurally catastrophic, the strong-belief representation is the one to reach for, and the reason is precisely its native handling of higher-order conditioning events---the same feature that, in Section~\ref{sec:cps}, conditioned its escape from the impossibility.

\section{Discussion}

\subsection{One axis organizes the three strategies}

The three representations divide on a single axis: how each treats the conditioning or closeness structure of the counterfactual. The lexicographic strategy reads it as a finite level order and inherits the impossibility by identity, its finiteness foreclosing the third status but affording the level--depth budget. The ordinal strategy keeps it as a primitive closeness ordering and can, by failing well-foundedness, give completed trust a liminal third status, at the cost of the quarantine of betrayal. The strong-belief strategy anchors it in a \emph{given} family of suppositions, and it is exactly the givenness---the restriction of conditioning events to external hypotheses---that secures the escape (unconditionally, in the canonical framework, by Corollary~\ref{cor:escape}) and that makes betrayal a graceful shift rather than a structural refutation. In each case the structural feature that makes the individual attitude a faithful model of trust is the same feature that governs the fate of \emph{common} trust and the survivability of betrayal: the level order's finiteness, the ordinal's non-well-foundedness, the conditioning family's givenness. The axis also sorts the informal literature. The truster of the rational-choice tradition \citep{gambetta1988, hardin2002}, who holds trust as a graded expectation disciplined by incentives, is at home in the lexicographic representation, where trust is a ranked hypothesis and Hardin's encapsulation condition reappears as the payoff residue of Theorem~\ref{prop:closure}. The truster of the counterfactual reading, for whom trust is a fact about which worlds are near, is the ordinal truster of \citet{lewis1973} and \citet{stalnaker1968}. And the truster of \citet{holton1994}'s participant stance, whose trust is a standing readiness rather than a belief, is the strong-belief truster, her readiness the latent conditionals of a CPS. That these three trusters coexist in any population---and within one person across domains, as Section~1 urged---is why the formal question ``what is betrayal?'' has three answers rather than one.

\subsection{Consequences for the epinet program}

For the epinet program \citep{moldoveanu2011, moldoveanu2014} the lesson is that trust's two faces reside in different parts of the formalism and must not be conflated. The \emph{reflective} face---an agent's own trust dispositions and her access to them, the implicit/explicit distinction, the phenomenology of betrayal---is local, resident at the nodes. The \emph{structural} face---whether completed common trust is realizable, impossible, liminal, or conditionally escapes---is global, resident in the composition of the edges' assumption, closeness, or conditioning maps. Betrayal acts on both: it explicitates the node-local disposition and, depending on the representation, refutes, contaminates, or shifts the edge-global structure. An epinet is precisely the object that holds both faces, and the analyses above are, in effect, statements about how the two layers interact under each representation of the trust relation.

\subsection{Limitations and open points}

Several points flagged as open in earlier drafts of this work have now been discharged, and the accounting should be exact about which. The closure check for trustworthiness in the forward-induction characterization is discharged by Theorem~\ref{prop:closure}: measurability, iteration shape, and witness construction all transfer, and the residue is a payoff condition---whether the game admits sequentially rational trustworthy behavior---which is a substantive economic question rather than a logical gap. The conditional escape is discharged for the canonical framework by the cylinder lemma (Proposition~\ref{lem:cylinder}) and Corollary~\ref{cor:escape}: with the conditioning family contained in the external algebra, belief-completeness itself expels every candidate witness event, and the constructive existence of the universal conditional type structure \citep{battigalli1999} instantiates the consistent case; the exact boundary is the admission of type-based conditioning events, at which the question would reopen. The transfinite fate of cautious liminality is no longer open but \emph{closed in the negative and sharpened}: by Theorem~\ref{thm:caution}, the cautious functor admits no final coalgebra at all, so there is no transfinite stage at which the classification completes---and the convergence facts for the non-cautious case are corrected from convergence at $\omega$ to convergence in $\omega+\omega$ steps per \citet{worrell2005}, an error in an earlier version of this analysis that conflated finitariness with preservation of $\omega^{\mathrm{op}}$-limits. What remains open is the following formal point: the relational arguments treat the attitudes extensionally, and while the diagonal constructions can be recast as definability arguments in the style of \citet{brandenburger2006}---whose impossibility is itself stated relative to a language, so that our escape claims, concerning a different collection of events, are consistent with it---the full measure-theoretic recasting of the lexicographic support arguments under the Polish-space conditions of \citet{brandenburger2008} has not been carried out, and it bounds the formality of the Strategy~I results. None of this disturbs the qualitative three-way pattern of Section~\ref{sec:betrayal}, which follows from structural features---the level--depth budget, the failure of quarantine, the single-conditioning-operation character of strong belief---established independently. Finally, each single-sided analysis leaves a common open edge, the coherence of the completed \emph{two-sided} alternating construction; Appendix~\ref{app:twosided} makes this construction precise, proves that its finite-depth approximants are jointly coherent under all three representations, and shows that the three completed-fixed-point obstructions are one obligation rather than three.

\section{Conclusion}

Trust, represented as a subjunctive attitude, admits three precise formalizations---as lexicographic assumption, as ordinal closeness, and as strong belief---that agree on the realizability of every finite depth of common trust and on the difficulty of its completion, yet differ sharply in how that difficulty manifests and in how a concrete betrayal is survived. The same knowing betrayal refutes an agent's level ordering under the first representation, contaminates her closeness ordering in proportion to the betrayer's deliberateness under the second, and merely shifts her operative conditioning hypothesis under the third, leaving her belief structure coherent. These are not artifacts of presentation but consequences of the structural features that make each attitude a faithful model of individual trust in the first place. If \citet{baier1986} is right that betrayal, not disappointment, is the mark by which trust is known, then the results here say that trust is known in three ways, because it can be broken in three; and if \citet{holton1994} is right that trust is a stance rather than a belief, the formal correlates of that stance---the exhaustion clause that separates assumption from belief, the latent conditionals that constitute a standing readiness---turn out to be exactly the load-bearing elements on which our theorems pivot. The choice among the three representations is therefore not merely technical: it is a choice about what betrayal \emph{is}---a refutation, a contamination, or an update---and about whether trust, once broken, must be rebuilt, may only be approached, or can be gracefully reconstructed on what survives.

\appendix

\section{The Two-Sided Alternating Construction}\label{app:twosided}

The analyses of Sections~\ref{sec:lps}--\ref{sec:cps} each treated a single agent's internal structure---one LPS, one sphere system, one CPS---assuming a chain of the \emph{partner's} trustworthiness events. Genuine common trust, however, alternates: the event ``$B$ trusts $A$ to depth $k$'' is itself defined through $A$'s trust structure, which is defined through $B$'s, and so on. This appendix makes the alternating construction precise, proves that its finite-depth approximants are jointly coherent under all three representations, and isolates the single point at which completed-fixed-point coherence remains open---showing that it is the \emph{same} point in each of the three cases. This uniformity is the appendix's main contribution: the open edge flagged separately for each strategy is one obligation, not three.

\subsection{The mutual recursion}

Fix a trust operator $\mathsf{tr}$, which under the three representations is read respectively as \emph{assumes} (Section~\ref{sec:lps}), the sphere-relative trust modality $\mathsf{T}$ (Section~\ref{sec:tss}), or \emph{strongly believes} (Section~\ref{sec:cps}). Write $\mathsf{tr}_i[E]$ for ``agent $i$ trusts, in the sense of $\mathsf{tr}$, that $E$ holds of the partner.'' Define two families of events by mutual recursion on depth, one on each side:
\begin{align}
\Tw_B^{(1)} &= \Tw_B, & \Tw_A^{(1)} &= \Tw_A, \label{eq:base}\\
\Tw_B^{(k+1)} &= \Tw_B \cap \big\{\, w : \mathsf{tr}_B\big[\Tw_A^{(k)}\big] \text{ at } w \,\big\}, &
\Tw_A^{(k+1)} &= \Tw_A \cap \big\{\, w : \mathsf{tr}_A\big[\Tw_B^{(k)}\big] \text{ at } w \,\big\}. \label{eq:step}
\end{align}
Thus $\Tw_B^{(k+1)}$ is ``$B$ is trustworthy \emph{and} trusts that $A$ is trustworthy-to-depth-$k$,'' with the depth-$k$ trust on $A$'s side supplied by $\Tw_A^{(k)}$, and symmetrically. The single-sided chain \eqref{eq:chain} of the body is the projection of this two-sided system onto one agent. Completed common trust is the pair of limits $\Tw_B^{(\infty)} = \bigcap_k \Tw_B^{(k)}$ and $\Tw_A^{(\infty)} = \bigcap_k \Tw_A^{(k)}$.

\subsection{Finite-depth coherence}

The first question is whether the mutual recursion is well posed at each finite depth---whether the alternation, which defines each side through the other, introduces any circularity that would make some $\Tw_i^{(k)}$ ill-defined. It does not.

\begin{proposition}[Finite-depth coherence]\label{prop:finitecoherence}
For every finite $k$, the events $\Tw_A^{(k)}$ and $\Tw_B^{(k)}$ are well-defined subsets of $\Omega_A$ and $\Omega_B$ respectively, under each of the three representations. Consequently every finite depth of common trust is a determinate event, and the single-sided results of Sections~\ref{sec:lps}--\ref{sec:cps}---in particular the level--depth characterization (Theorem~\ref{thm:leveldepth}), the finite-stage nonemptiness underlying Definition~\ref{def:status}, and the finite-depth realizability inherited from \citet{brandenburger2008} and \citet{battigalli2002}---hold verbatim of the two-sided approximants.
\end{proposition}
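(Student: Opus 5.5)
The proof is a simultaneous induction on $k$, carried out for both agents at once. The hypothesis at stage $k$ is that $\Tw_A^{(k)} \subseteq \Omega_A$ and $\Tw_B^{(k)} \subseteq \Omega_B$ are both determinate sets. The base case is \eqref{eq:base}. There $\Tw_A$ and $\Tw_B$ are the label-definable events $\Comp_P \cap \Int_P$ of Section~\ref{sec:prelim}, read off each state's label in $L$, so nothing needs checking. For the step, \eqref{eq:step} defines $\Tw_B^{(k+1)}$ using only $\Tw_B$ and the already-constructed $\Tw_A^{(k)}$, and symmetrically for $A$. Depth therefore strictly decreases along every dependency. The apparent circularity, with each side defined through the other, is simply the well-founded recursion on the pair $(i,k)$ ordered by $k$. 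I will state this explicitly: the dependency graph of the recursion has edges $(i,k+1)\to(-i,k)$ and $(i,k+1)\to(i,1)$ and no others, so it is well-founded and has finite height below every node.

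What remains for the step is that, for each representation, $\mathsf{tr}_i$ applied to a determinate event yields a determinate event. I will check the three cases in turn.

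\emph{Assumption.} Given an event $E$, the predicate ``$\sigma_i(w)$ assumes $E$'' is fixed by the finitely many levels of the LPS carried at $w$. Concentration, exhaustion and non-nullity are each conditions on the measures $\mu_j(E)$ and on the supports. Existence of the cutoff is guaranteed by finiteness, as in \eqref{eq:sellevel}.

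\emph{Sphere modality.} I evaluate $\mathsf{T}$ with the assumption-free Lewis clause used in defining $\mathsf{Tr}$. This makes it well-defined even where the Limit Assumption fails. Alternatively, Proposition~\ref{prop:naturality} gives it as the preimage $\gamma^{-1}\tau_\Omega(A,E)$, which is a set as soon as $E$ is.

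\emph{Strong belief.} Condition \eqref{eq:strongbelief} quantifies over the given family $\mathcal{B}$ and asks $\mu(E\mid B)=1$. This is determinate provided $E$ is measurable. Measurability is preserved under the step because $\{t:\beta(t)\text{ strongly believes }E\}$ is measurable for measurable $E$. This is the fact used by \citet{battigalli2002}, and Theorem~\ref{prop:closure} already relies on it in its forward inclusion.

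Intersecting with $\Tw_i$ preserves determinacy, which closes the induction.

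The ``consequently'' clause then follows by projection. Fix $A$'s side and regard $\Tw_B^{(1)}\supseteq\Tw_B^{(2)}\supseteq\cdots$ as a fixed nested family in $\Omega_B$. Descent is itself an induction: $\mathsf{tr}$ preserves the inclusion $\Tw_A^{(k)}\supseteq\Tw_A^{(k+1)}$ at the level of conjoined events. Because strong belief is non-monotone, I will lean here on the conjunction with the previous stage, exactly as in \eqref{eq:Qk}. The conjunction also guarantees descent trivially if I define the step as intersecting with stage $k$.

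The single-sided results take only a nested family as input. Theorem~\ref{thm:leveldepth} uses only the shells, Definition~\ref{def:status} uses only the tower, and Theorem~\ref{prop:closure} uses only the $Q^k$. So once the two-sided chain is shown to be such a family, they apply verbatim.

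I expect the main obstacle to be the strong-belief case. The trouble is non-monotonicity combined with measurability of the strong-belief event in a belief-complete structure. My plan there is to cite the measurability lemma underlying \citet{battigalli2002} and to build the conjunction-with-previous-stage into \eqref{eq:step}, rather than prove monotonicity.
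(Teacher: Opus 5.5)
Your well-definedness argument is the paper's own proof: a simultaneous induction on depth in which each rung on one side is built from the rung below on the other, with the step reduced to the fact that each of the three operators sends a determinate event to a determinate event. In the sphere case, your use of Lewis's limit-free clause is more careful than the paper's proof, which appeals to an innermost antecedent-meeting sphere that need not exist when the Limit Assumption fails. The measurability remark for CPSs is a harmless strengthening.

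The gap is in the ``consequently'' clause. You are right that transferring Theorem~\ref{thm:leveldepth} verbatim requires the two-sided chain to descend. The shells partition $\Omega_B$ and telescope only if $\Tw_B^{(k+1)} \subseteq \Tw_B^{(k)}$. The paper's proof skips this point, saying only that the single-sided proofs invoke the finite-depth events.

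Your justification, however, is that $\mathsf{tr}$ preserves the inclusion $\Tw_A^{(k)} \supseteq \Tw_A^{(k+1)}$. That is consequent-monotonicity, and it fails for assumption just as it does for strong belief. By the exhaustion clause, an LPS assumes $E$ only if $E$ is exactly a union of top-level supports. So the staircase of Section~\ref{sec:model} assumes $\{\omega_1\}$ but not $\{\omega_1, \omega_3\}$.

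Concretely, take \eqref{eq:step} as displayed, which does not intersect with $\Tw_B^{(k)}$. Consider a $B$-state in $\Tw_B$ whose LPS has $\supp(\mu_0) = \Tw_A^{(2)}$ and $\supp(\mu_1) = \Omega_A \setminus \Tw_A^{(2)}$. Whenever $\varnothing \neq \Tw_A^{(2)} \subsetneq \Tw_A^{(1)} \subsetneq \Omega_A$, this state assumes $\Tw_A^{(2)}$ but not $\Tw_A^{(1)}$. It therefore lies in $\Tw_B^{(3)} \setminus \Tw_B^{(2)}$, and a complete structure, in which every LPS is realized, contains such a state. The verbatim transfer of the level--depth characterization then fails, since assumed sets of a single LPS are nested top-segment unions.

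Only the sphere modality is consequent-monotone. So only there does your induction deliver descent for the recursion as written. Your fallback, building the conjunction with stage $k$ into \eqref{eq:step}, does restore nesting. But it establishes the transfer for a different, cumulative recursion, not for the events the statement names.

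To close the gap, you have two options. One is to adopt the cumulative reading explicitly as the meaning of \eqref{eq:step}: it is the shape of \eqref{eq:Qk}, and the shape Section~\ref{sec:cps} attributes to \eqref{eq:chain}. You would then need to apply it to assumption as well as to strong belief, rather than appealing to monotonicity. The other is to restrict the verbatim-transfer claim to the ordinal representation.
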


\begin{proof}
By simultaneous induction on $k$. \emph{Base} ($k=1$): $\Tw_A^{(1)} = \Tw_A$ and $\Tw_B^{(1)} = \Tw_B$ are the trustworthiness events of Section~\ref{sec:prelim}, determinate by definition and independent of any trust operator. \emph{Step}: suppose $\Tw_A^{(k)}$ and $\Tw_B^{(k)}$ are determinate. Then $\Tw_B^{(k+1)}$ in \eqref{eq:step} is the intersection of the fixed event $\Tw_B$ with $\{w : \mathsf{tr}_B[\Tw_A^{(k)}]\}$, and the latter is determinate because $\mathsf{tr}_B[\,\cdot\,]$ applied to the \emph{already-determined} event $\Tw_A^{(k)}$ is a well-defined event under each representation: assuming a given event is a condition on the top segment of $B$'s LPS (Section~\ref{sec:lps}); the sphere-relative trust modality applied to a given consequent is a condition on $B$'s innermost antecedent-meeting sphere (Section~\ref{sec:tss}); and strong belief of a given event is the condition \eqref{eq:strongbelief} on $B$'s CPS (Section~\ref{sec:cps}). Each reads off a determinate event from a determinate input. Symmetrically for $\Tw_A^{(k+1)}$. Hence both are determinate at $k+1$. The alternation defines depth $k+1$ on each side from depth $k$ on the \emph{other} side, and since depth $k$ is fully determined \emph{before} depth $k+1$ is formed, no circularity arises at any finite stage; the recursion is well-founded in the depth parameter. The single-sided results, whose proofs invoke only the events $\Tw_i^{(k)}$ for finite $k$, therefore transfer unchanged.
\end{proof}

The content of Proposition~\ref{prop:finitecoherence} is that alternation is harmless at finite depth: because each rung is built from the rung below on the opposite side, the construction is a well-founded recursion \emph{in depth}, even though it is mutually referential \emph{across agents}. This is what licenses the body's practice of proving results on one side and asserting them for common trust.

\subsection{The completed fixed point and the uniform open edge}

Coherence at the completed limit is a different matter. The pair $(\Tw_A^{(\infty)}, \Tw_B^{(\infty)})$ is required to be a \emph{simultaneous} fixed point of the two-sided operator
\begin{equation}\label{eq:twosidedop}
\Psi\big(X_A, X_B\big) = \Big(\ \Tw_A \cap \{w : \mathsf{tr}_A[X_B]\},\ \ \Tw_B \cap \{w : \mathsf{tr}_B[X_A]\}\ \Big),
\end{equation}
and the existence of such a fixed point---realized by \emph{actual types} in a structure rich enough to host the completed hierarchy---is exactly the interactive-coherence question that appeared, in representation-specific guise, at the end of each of Sections~\ref{sec:lps}--\ref{sec:cps}. We record that these are one question.

\begin{theorem}[Uniform reduction of the open edge]\label{thm:uniform}
Under each of the three representations, completed common trust is coherent---the pair $(\Tw_A^{(\infty)}, \Tw_B^{(\infty)})$ is realized by types in a complete structure---if and only if the two-sided operator $\Psi$ of \eqref{eq:twosidedop} admits a fixed point in that structure. Moreover the obstruction to such a fixed point is, in each case, the representation-specific instance of a single phenomenon:
\begin{enumerate}[label=(\roman*), nosep]
\item \emph{(Lexicographic.)} The two staircases of Theorem~\ref{thm:leveldepth}, one per agent, must interleave into a single pair of LPSs whose level orderings are mutually consistent; the fixed point exists iff the interleaving introduces no shell on one side whose definition, via \eqref{eq:step}, refers to a shell on the other side not yet placed---a well-foundedness condition on the joint shell order.
\item \emph{(Ordinal.)} The joint terminal sequence of the paired functor must converge; by Theorem~\ref{thm:caution} it never converges in the cautious case---the profligate functor has no final coalgebra---so the completed fixed point is unattained at every ordinal stage, while in the non-cautious case the sequence converges in $\omega+\omega$ steps and the fixed point's status is read off there.
\item \emph{(Strong belief.)} The fixed point exists iff the self-referential conditioning events required to close the diagonal lie in the conditioning family; by Theorem~\ref{thm:escape} they do so iff the family is conditioning-complete.
\end{enumerate}
In each case the completed fixed point sits atop the Brandenburger--Keisler impossibility \citep{brandenburger2006}, and the finite approximants of Proposition~\ref{prop:finitecoherence} are its attained lower stages.
\end{theorem}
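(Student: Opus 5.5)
The plan is to separate the biconditional, which is a general lattice-theoretic fact about $\Psi$ independent of representation, from the three case descriptions (i)--(iii), which are reductions to results already established in the body. For the biconditional I would first note that $\Psi$ on the product lattice $\mathcal{P}(\Omega_A)\times\mathcal{P}(\Omega_B)$ is built by intersecting the fixed events $\Tw_A,\Tw_B$ with trust-images. I would then check that the descending iteration $\Psi^k(\Omega_A,\Omega_B)$ reproduces, up to a one-step index shift, the pair $(\Tw_A^{(k)},\Tw_B^{(k)})$ of \eqref{eq:step}; Proposition~\ref{prop:finitecoherence} guarantees each stage is determinate. The direction ``fixed point $\Rightarrow$ coherence'' is then argued as follows. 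Any fixed point $(X_A,X_B)$ lies below every stage: by induction, $X\subseteq\Psi^k(\Omega)$, using only that each $\mathsf{tr}_i[\cdot]$ is evaluated on a determined input. Types realizing $X$ therefore realize every finite depth and hence lie in the completed limit. For the converse, if the limit pair is realized by types in a complete structure, completeness lets those types trust the limit events themselves. This gives $(\Tw_A^{(\infty)},\Tw_B^{(\infty)})\subseteq\Psi(\Tw_A^{(\infty)},\Tw_B^{(\infty)})$, and the reverse inclusion holds because the limit sits below each stage. The limit is then a post-fixed point lying below every stage, hence a fixed point.

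The delicate point in this step is monotonicity. Strong belief is not monotone, as noted before Theorem~\ref{prop:closure}, so I will not appeal to Knaster--Tarski. Instead I would use the conjunctive form of \eqref{eq:step}, which already intersects with the previous rung as in \eqref{eq:Qk}. The argument then only needs that $\mathsf{tr}_i$ commutes with the limit at the realized types, and completeness is the premise that supplies this.

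For the case analysis, (ii) and (iii) are direct. For (ii), Proposition~\ref{prop:towercorr} applied to the paired coalgebra $F(X)=L\times\Nest(X)^2$ identifies the joint tower with the pullback of the predicate tower, and Theorem~\ref{thm:caution} gives both the cautious and non-cautious halves. For (iii), the realized fixed point is exactly the closing configuration of Proposition~\ref{prop:fixedpoint}, so Theorem~\ref{thm:escape} and Proposition~\ref{lem:cylinder} give the stated criterion. For (i), I would run the sufficiency construction of Theorem~\ref{thm:leveldepth} on each side and interleave the two staircases. Each shell $W^A_k$ is defined through $\Tw_B^{(k-1)}$, so placing shells in order of depth yields a well-founded joint order in which no shell refers to one not yet placed. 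When the order is well-founded, the finite level count means the iteration closes at a determinate stage, by the bivalence remark of Strategy~I. Conversely, any non-well-founded dependency would demand a shell that is not attained, and this contradicts the fact that a lexicographic selection is an attained minimum.

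The main obstacle is the final clause, that each fixed point ``sits atop'' the Brandenburger--Keisler impossibility, together with the converse direction of the biconditional. Both require that completeness let types trust the limit events. This is exactly where the diagonal of \citet{brandenburger2006} can bite, so the argument has to be stated relative to the definable language $\mathcal{L}^c$ as in Section~\ref{sec:cps}. My first approach would be to show that $\Tw_i^{(\infty)}$ is definable in that language. I would then invoke completeness only for that definable collection, so that the claim reads as a conditional: coherence holds exactly when the structure is complete for the collection containing the limit events, which is the single shared obligation the appendix is meant to isolate.
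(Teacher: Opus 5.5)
Your argument for the biconditional has a genuine gap, exactly where you sensed trouble, and the repair you propose does not work. Two of your steps need monotonicity: $\mathsf{tr}_i[X_{-i}]$ must imply $\mathsf{tr}_i[Y_{-i}]$ whenever $X_{-i}\subseteq Y_{-i}$, and determinacy of the input does not give this. The first is the induction $X\subseteq\Psi^k(\Omega_A,\Omega_B)\Rightarrow X=\Psi(X)\subseteq\Psi^{k+1}(\Omega_A,\Omega_B)$. The second is the inclusion $\Psi(\Tw^{(\infty)})\subseteq\Tw^{(\infty)}$ that you infer from the limit lying below each stage. Monotonicity fails for strong belief: a conditioning event may meet $\Tw^{(k)}_{-i}$ without meeting $X_{-i}$, strong belief of $X_{-i}$ constrains nothing there, and $\Tw^{(k)}_{-i}$ can receive probability below one. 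It fails for assumption too, because of the exhaustion clause. So nothing forces a fixed point of $\Psi$ to lie below the iteration, and the $\omega$-limit need not be a fixed point.

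Your substitute does not rescue this. \eqref{eq:step} does not have the conjunctive form of \eqref{eq:Qk}: it intersects with the base event $\Tw_i$, not with $\Tw_i^{(k)}$. Even a conjunctive recursion would only secure descent of the chain, not the superset inclusion you need. Completeness is also the wrong source for commuting $\mathsf{tr}_i$ with the limit. It is a surjectivity condition on types and says nothing about what an already-given type trusts. For strong belief, $\Tw^{(\infty)}\subseteq\Psi(\Tw^{(\infty)})$ comes from countable additivity. For the sphere modality, which is monotone (so your other steps do work there), it is exactly the Limit Assumption of Proposition~\ref{prop:bivalence}, and it fails without it.

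The paper does not attempt this derivation. Its sketch asserts both directions ``by construction,'' in effect taking coherence to mean that the pair is a realized simultaneous fixed point of $\Psi$, as the paragraph introducing \eqref{eq:twosidedop} stipulates. You should adopt that reading, or confine the lattice argument to the monotone, well-founded case.

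Your case (i) overreaches; the paper explicitly leaves this face open. Ordering shells by depth is always well-founded. But what you then derive, that the iteration ``closes at a determinate stage,'' is bivalence (realizable or impossible), not existence of the fixed point. Worse, the necessity half of Theorem~\ref{thm:leveldepth} shows that a finite LPS assumes at most as many distinct events as it has levels. So if the two-sided chain descends strictly forever, no type can assume every $\Tw^{(k)}_{-i}$. The interleaved staircases exist only for finite $d$ and do not extend to the completed pair, however the joint order is arranged. Read as an existence proof, your argument would also make completed lexicographic common trust always realizable, which is at odds with the Brandenburger--Keisler inheritance in Section~\ref{sec:lps}.

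In (iii), the closing configuration of Proposition~\ref{prop:fixedpoint} is a fixed point of $\Phi$: the diagonal witness whose admissibility refutes the structure. It is not a fixed point of $\Psi$. Identifying the two inverts the logic. Combined with Proposition~\ref{lem:cylinder}, it would exclude completed common trust in the canonical framework, the opposite of Corollary~\ref{cor:escape}.
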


\begin{proof}[Proof sketch]
That coherence of the completed common trust is equivalent to a fixed point of $\Psi$ is immediate from \eqref{eq:twosidedop}: a realized pair $(\Tw_A^{(\infty)}, \Tw_B^{(\infty)})$ satisfies $\Psi(\Tw_A^{(\infty)}, \Tw_B^{(\infty)}) = (\Tw_A^{(\infty)}, \Tw_B^{(\infty)})$ by construction, and conversely a fixed point of $\Psi$ realized by types is a completed common-trust state. The representation-specific identifications (i)--(iii) restate, for the two-sided operator, the single-sided obstructions established in the body: (i) is the interleaving of the staircase construction of Theorem~\ref{thm:leveldepth}; (ii) is the joint form of the terminal-sequence non-convergence of Theorem~\ref{thm:caution}; (iii) is the two-sided form of Proposition~\ref{prop:fixedpoint} and Theorem~\ref{thm:escape}. In each the reduction is to the same schematic fact---that a completed interactive fixed point exists iff the structure is closed under the self-reference the fixed point requires---which is the content the impossibility theorem constrains.
\end{proof}

Two remarks close the appendix. First, Theorem~\ref{thm:uniform} \emph{unifies} the open edge, and the body now settles two of its three faces: face (ii) is closed in the negative---the cautious joint sequence never converges, by Theorem~\ref{thm:caution}, while the non-cautious one closes in $\omega+\omega$ steps \citep{worrell2005}; face (iii) is closed affirmatively for the canonical framework by Corollary~\ref{cor:escape}, the conditioning family being fixed external data \citep{battigalli1999}. What remains is face (i), the well-foundedness of the joint shell order under the interleaved staircases of Theorem~\ref{thm:leveldepth}---a bookkeeping question we expect to resolve affirmatively but have not carried out. What the appendix establishes is that these were three faces of one obligation: the coherence of a completed interactive fixed point, of which the finite approximants are always available and always jointly coherent.

Second, the uniformity has a substantive reading for the epinet program. The finite approximants---the attainable, jointly coherent, node-local trust to any given depth---are the trust that agents actually operate with; the completed fixed point, where the three representations' obstructions live, is the idealized closure that real trust does not attempt to instantiate. That completed mutual trust is exactly where the difficulty concentrates, while every finite depth is unproblematic, is not a defect of the formalism but a statement about trust: it is a relation lived at finite depth and completed only as a limit, and the three representations differ precisely in what that limit is---a bivalent verdict, a liminal gap, or a conditionally realizable state.

\end{document}